\documentclass{article} 
\usepackage{iclr2027_conference,times}

\usepackage{amsmath,amsfonts,bm}

\def\eqref#1{Equation~(\ref{#1})}

\def\1{\bm{1}}

\DeclareMathAlphabet{\mathsfit}{\encodingdefault}{\sfdefault}{m}{sl}
\SetMathAlphabet{\mathsfit}{bold}{\encodingdefault}{\sfdefault}{bx}{n}

\DeclareMathOperator*{\argmin}{arg\,min}

\iclrfinalcopy
\usepackage{hyperref}
\usepackage{url}
\usepackage{booktabs}
\usepackage{tabularx}
\usepackage[table]{xcolor}
\usepackage{multirow}
\usepackage{graphicx}
\usepackage{wrapfig}
\usepackage[inline]{enumitem}

\usepackage{amsmath}
\usepackage{amsthm}
\newtheorem{theorem}{Theorem}
\newtheorem{lemma}{Lemma}
\newtheorem{prop}{Proposition}
\newtheorem{corollary}{Corollary}
\newtheorem{remark}{Remark}

\usepackage{minitoc}

\title{Graph-Spectral Flow Matching for \\Multivariate Time Series Anomaly Detection}

\author{
Zepeng Zhang$^{1}$ \quad
Jhony H. Giraldo$^{2}$ \quad
Wenbin Wang$^{3}$ \quad
Olga Fink$^{1}$ \\[0.6em]
$^{1}$ IMOS, EPFL \quad
$^{2}$ LTCI, Télécom Paris, IP Paris \quad
$^{3}$ Independent Researcher
}

\begin{document}

\maketitle
\lhead{Preprint}
\begin{abstract}
Multivariate time series anomaly detection typically relies on evaluating discrepancies between observations and outputs produced by models trained on normal data. An alternative perspective is to characterize the distribution of normal data through the generative dynamics, \textit{i.e.}, the velocity field, of flow matching models. However, standard flow matching typically adopts linear probability paths that overlook dependencies among variables, leading to a misalignment with the structured data distribution. To address this issue, we propose GRASP, a flow matching framework with a graph-spectral path for multivariate time series anomaly detection. GRASP incorporates graph structure into the probability path by minimizing a fixed-endpoint action that combines kinetic energy with graph Dirichlet energy. This formulation yields a closed-form path based on graph-frequency-dependent hyperbolic interpolation. A velocity predictor trained on normal data then detects anomalies using weighted velocity discrepancies aggregated across source samples, flow times, and graph frequencies. Theoretically, we establish that GRASP is invariant to the choice of Laplacian eigenbasis and decompose its expected oracle anomaly score into bounded endpoint uncertainty and graph-frequency-weighted Fisher discrepancy. Experiments on four benchmarks demonstrate the superior anomaly detection performance of GRASP and validate the effectiveness of its graph-spectral path and weighting mechanism.

\end{abstract}

\section{Introduction}
Industrial systems are increasingly monitored by sensor networks that generate large volumes of multivariate time series (MTS) data.
Detecting anomalies in these data is critical for preventing failures, reducing downtime, and ensuring reliable operation \citep{wu2021graph,deng2021graph,fink2026physics}.
However, fault labels are typically scarce and incomplete, as many fault types occur rarely or are not observed during data collection, motivating unsupervised approaches to MTS anomaly detection \citep{zhang2019deep,audibert2020usad,belay2023unsupervised}.
Existing methods typically learn normal patterns and identify anomalies through prediction errors, reconstruction errors, or representation discrepancies \citep{jin2024survey,chen2024graph,ho2025graph}.
These approaches are effective when abnormal behavior produces clear discrepancies between observations and model outputs.
However, in practice, anomalous patterns may remain partially predictable or reconstructible, resulting in weak anomaly signals \citep{liu2025gcad,cho2025structured,zhang2026gslad}.
This motivates anomaly detection methods that go beyond measuring only endpoint-based prediction or reconstruction discrepancies.

Flow matching offers an alternative paradigm by characterizing observations through generative dynamics rather than solely through endpoint outputs, which has been shown to be effective in image anomaly detection \citep{chen2026flow}.
During training time, a velocity predictor learns to match conditional target velocities along prescribed probability paths connecting source samples to normal observations.
During inference time, anomalies are detected by measuring discrepancies between the predicted velocities and conditional target velocities along the entire probability path.
The probability path determines both the states at which an observation is evaluated and the conditional target velocity against which it is compared \citep{lipman2023flow,liu2023flow,albergo2024stochastic}.
Therefore, the design of the probability path is crucial for flow-matching-based anomaly detection.

MTS variables often exhibit structured dependencies that can be represented by a sensor graph, providing an informative inductive bias for modeling cross-variable interactions \citep{bronstein2021geometric, jin2024survey}.
The graph Laplacian decomposes multivariate signals into graph-frequency modes that describe different patterns of variation across connected nodes \citep{shuman2013emerging}.
Recent work further shows that anomalous behavior can induce heterogeneous energy shifts across graph frequencies, suggesting that different frequency modes carry distinct anomaly-relevant information \citep{liu2026modeling}.
However, existing flow matching models typically adopt a standard linear conditional path, which interpolates all directions identically without accounting for the underlying graph structure \citep{kollovieh2025flow,albergo2024stochastic}.
For structured data, this structural mismatch may produce intermediate states and target velocities that are inconsistent with the topology and geometry of the monitored system \citep{rozada2026graph,fang2026escaping,zhang2026spatiotemporal}.
This limitation is more pronounced for MTS anomaly detection because the anomaly score depends not only on the observed endpoint, but also on the velocity discrepancies evaluated along the entire path.
These observations motivate the design of a graph-informed probability path for flow matching.

In this work, we introduce flow matching with a GRAph-SPectral Path (GRASP), which explicitly incorporates structural information into the conditional probability path.
When graph edges encode similarity between normal sensor signals, graph Dirichlet energy provides a natural measure for regularizing variation across connected nodes. We therefore formulate path construction as a fixed-endpoint variational problem that balances kinetic energy and graph Dirichlet energy. 
The resulting closed-form solution assigns a graph-frequency-dependent hyperbolic interpolation schedule to each frequency mode. Specifically, the zero-frequency modes recover standard linear interpolation, whereas higher-frequency modes undergo progressively stronger contraction at intermediate flow times. 
In this way, GRASP preserves the prescribed source and data endpoints while imposing a graph-dependent smoothness prior along the probability path. 
The velocity predictor trained on normal data then detects anomalies by measuring velocity discrepancies along this structured path.

The resulting path induces different conditional uncertainty of the
intermediate state and velocity-residual scales across flow times and graph frequencies. To account for these differences, GRASP uses flow-time- and graph-frequency-dependent weights when aggregating velocity discrepancies into an anomaly score. Theoretically, we establish that the node-domain path, target velocity, and resulting anomaly score are invariant to the choice of orthonormal basis within repeated Laplacian eigenspaces. Under the population-optimal velocity predictor, we further show that the expected anomaly score decomposes into an endpoint-uncertainty term and a graph-frequency-weighted Fisher discrepancy. Experiments on four MTS benchmarks demonstrate superior anomaly detection performance of GRASP and the effectiveness of the graph-spectral path and the weighting schedule.

Our contributions are summarized below:
\begin{itemize}[leftmargin=0.5cm]
    \itemsep0em
    \item We derive a closed-form graph-spectral probability path from a fixed-endpoint variational problem combining kinetic energy and graph Dirichlet energy, yielding frequency-dependent hyperbolic interpolation. Based on the graph-spectral path, we introduce an anomaly score that aggregates weighted velocity discrepancies across source samples, flow times, and graph frequencies.

    \item We establish that GRASP is invariant to different Laplacian eigenbases, and decompose the expected oracle anomaly score into bounded endpoint uncertainty and weighted Fisher discrepancy.

    \item We conduct experiments on four MTS benchmarks, demonstrating that GRASP achieves competitive or superior performance compared with the baselines. 
    The ablation and sensitivity studies validate the effectiveness of the graph-spectral path design and anomaly scoring mechanism.
\end{itemize}
Further discussion of related work is provided in Appendix \ref{app:related_work}.
\section{Preliminaries}
\textbf{Notations and Problem Definition.}
We use calligraphic letters, such as $\mathcal{X}$, to represent sets, uppercase bold letters, such as $\mathbf{X}$, to represent matrices, lowercase bold letters, such as $\mathbf{x}$, to represent vectors, and lowercase letters, such as $x$, to represent scalars.
A complete summary of the notation is provided in
Appendix~\ref{app:notation}.
We consider an MTS observation over $N$ variables. 
A time-series window is represented as $\mathbf{X}\in\mathbb{R}^{N\times R}$, where its $r$-th column $\mathbf{x}_r\in\mathbb{R}^{N}$ contains the observations of all $N$ variables at timestep $r$, and $R$ denotes the window length.
The relationships among the variables are captured by a graph $\mathcal{G}=(\mathcal{N},\mathcal{E})$, where $\mathcal{N}$ and $\mathcal{E}$ denote the sets of nodes and edges, respectively.
We denote by $\mathbf{L}\in\mathbb{R}^{N\times N}$ the symmetric normalized graph Laplacian matrix.
Since $\mathbf{L}$ is real, symmetric, and positive semidefinite, it admits an eigendecomposition $\mathbf{L}=\boldsymbol{\Psi}\mathbf{\Lambda}\boldsymbol{\Psi}^{\top},$ where $\boldsymbol{\Psi}=[\boldsymbol{\psi}_1,\ldots,\boldsymbol{\psi}_N]$ is an orthogonal matrix whose columns are the Laplacian eigenvectors and $\mathbf{\Lambda} =\operatorname{diag}(\lambda_1,\ldots,\lambda_N)$ contains the corresponding eigenvalues ordered as $0\leq\lambda_1\leq\cdots\leq\lambda_N\leq2$.

\textbf{MTS Anomaly Detection.}
During training, we observe only normal time-series windows sampled from an unknown normal data distribution $p$.
At test time, observations are drawn from a potentially anomalous distribution $q$.
Let $p_t$ and $q_t$ denote the corresponding path marginals at flow time $t$. 
Our goal is to learn the generative dynamics of normal data without anomaly samples and construct an anomaly score that quantifies the deviation of a test observation from the learned normal dynamics.

\textbf{Conditional Flow Matching.}
Flow matching learns a time-dependent velocity field that transports $\mathbf{X}_0\sim p_0$ to $\mathbf{X}_1\sim p_1$ along a prescribed probability path \citep{lipman2023flow, albergo2024stochastic}.
In this paper, the entries of \(\mathbf X_0\) are sampled independently from a standard Gaussian distribution, while $p_1$ corresponds to the normal data distribution $p$.
The conditional path between an endpoint pair $(\mathbf{X}_0,\mathbf{X}_1)$ is specified by an interpolation map and the corresponding conditional target velocity:
\begin{equation}
    \mathbf{X}_t
    =
    \phi_t(\mathbf{X}_0,\mathbf{X}_1),
    \qquad \mathbf{U}_t
    =
    \frac{\partial}{\partial t}
    \phi_t(\mathbf{X}_0,\mathbf{X}_1), \qquad t\in[0,1],
\end{equation}
which satisfies the boundary conditions $\phi_0(\mathbf{X}_0,\mathbf{X}_1)=\mathbf{X}_0$ and $\phi_1(\mathbf{X}_0,\mathbf{X}_1)=\mathbf{X}_1$.
Conditional flow matching trains a parameterized velocity field $\mathbf{V}_t(\mathbf{X};\boldsymbol{\theta})$ to approximate the conditional target velocity by minimizing
\begin{equation}
    \mathcal{L}_{\mathrm{CFM}}(\boldsymbol{\theta})
    =
    \mathbb{E}_{t\sim\mathcal{U}[0,1],\mathbf{X}_0,\mathbf{X}_1}
    \left[
        \left\|
        \mathbf{V}_t(\mathbf{X}_t;\boldsymbol{\theta})
        -
        \mathbf{U}_t
        \right\|_F^2
    \right],
\end{equation}
where $\mathcal{U}[0,1]$ is the uniform distribution between 0 and 1.
A commonly used probability path is the linear interpolation, where the intermediate state and the conditional target velocity are defined as
\begin{equation}
    \mathbf{X}_t
    =
    \phi_t(\mathbf{X}_0,\mathbf{X}_1)
    =
    (1-t)\mathbf{X}_0+t\mathbf{X}_1\quad \text{and}\quad
    \mathbf{U}_t
    =
    \mathbf{X}_1-\mathbf{X}_0.
\end{equation}
This path construction applies the same linear interpolation schedule to all directions in the ambient data space and therefore does not explicitly account for structural dependencies among variables.

\textbf{Graph Fourier Transform.}
The eigendecomposition of the graph aplacian matrix provides a graph Fourier basis for signals defined on the corresponding graph $\mathcal{G}$.
Given an MTS graph signal $\mathbf{X}\in\mathbb{R}^{N\times R}$, its graph Fourier transform and the inverse graph Fourier transform are defined as follows:
\begin{equation}
    \hat{\mathbf{X}}
    =
    \boldsymbol{\Psi}^{\top}\mathbf{X}
\quad\text{and}\quad
    \mathbf{X}
    =
    \boldsymbol{\Psi}\hat{\mathbf{X}}.
\end{equation}
The $k$-th row $\hat{\mathbf{x}}_k\in\mathbb{R}^{R}$ of $\hat{\mathbf{X}}$ contains the coefficients of the $k$-th graph-frequency mode across the $R$ timesteps.
The Laplacian eigenvalue $\lambda_k$ characterizes the graph-frequency associated with eigenvector $\boldsymbol{\psi}_k$.
Modes associated with smaller eigenvalues vary more smoothly over the graph, whereas modes associated with larger eigenvalues exhibit stronger variation across connected nodes. The graph Dirichlet energy admits a spectral decomposition that separates graph frequency modes:
\begin{equation}
    \operatorname{Tr}
    \left(
        \mathbf{X}^{\top}\mathbf{L}\mathbf{X}
    \right)
    =
    \operatorname{Tr}
    \left(
        \hat{\mathbf{X}}^{\top}
        \mathbf{\Lambda}
        \hat{\mathbf{X}}
    \right)
    =
    \sum_{k=1}^{N}
        \lambda_k
        \left\|
            \hat{\mathbf{x}}_k
        \right\|_2^2.
\end{equation}
This decomposition shows that the graph Dirichlet energy penalizes high-frequency components more strongly, thereby providing a natural measure of signal variation across connected nodes.
\begin{figure}[t]
    \centering
    \includegraphics[width=1.0\linewidth]{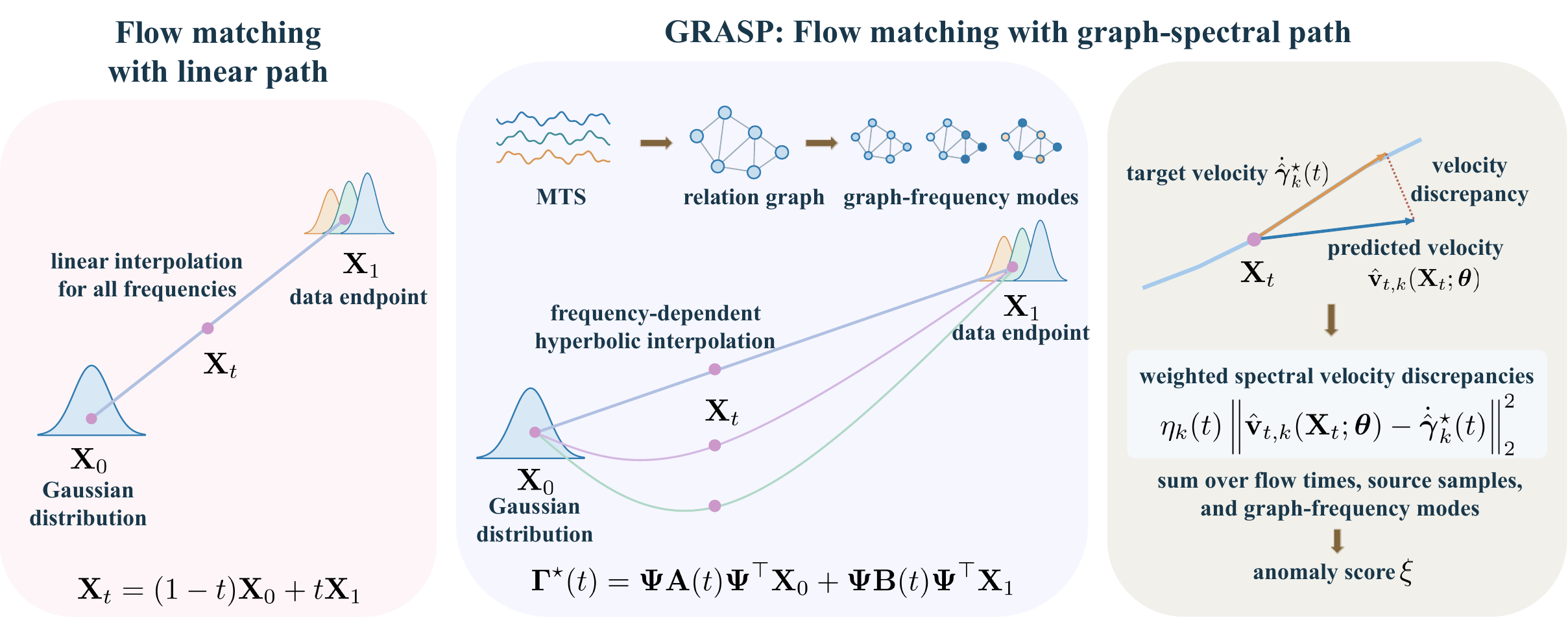}
    \caption{Overview of GRASP for MTS anomaly detection.
Left: flow matching with the same linear interpolation for all graph-frequency modes.
Middle: GRASP with the graph-spectral path that applies frequency-dependent hyperbolic interpolation. 
Right: anomaly scoring strategy of GRASP.}
    \label{fig:flowchart}
\end{figure}
\section{Flow Matching with Graph-Spectral Path}
The standard linear path used in flow matching applies the same interpolation schedule to all directions, implicitly imposing an isotropic transport geometry.
Ignoring the structural information in MTS data may produce intermediate states and target velocities that are poorly aligned with the underlying graph geometry.
In this section, we develop GRASP, which explicitly incorporates structural information into probability-path design through a graph-informed variational formulation.
A schematic comparison between GRASP and standard linear-path flow matching is given in Figure \ref{fig:flowchart}.
\subsection{Graph-Spectral Path Design}
The linear path widely adopted by flow matching models can be characterized as the minimizer of a variational problem with the kinetic energy Lagrangian function defined as follows \citep{du2026lagrangian}:
\begin{equation}
\boldsymbol{\Gamma}^\star=\argmin_{\boldsymbol{\Gamma}(0)=\mathbf{X}_0,\boldsymbol{\Gamma}(1)=\mathbf{X}_1}\int_0^1\mathcal{L}_\mathrm{lin}({\boldsymbol{\Gamma}},\dot{\boldsymbol{\Gamma}},t)dt \quad \text{with}\quad 
\mathcal{L}_\mathrm{lin}({\boldsymbol{\Gamma}},\dot{\boldsymbol{\Gamma}},t)=\frac{1}{2}\|\dot{\boldsymbol{\Gamma}}(t)\|^2_F,
\end{equation}
where $\boldsymbol{\Gamma}:[0,1]\rightarrow\mathbb{R}^{N\times R}$ is the path function. 
Because this objective penalizes only kinetic energy, it treats all directions in the ambient data space isotropically.
Although such a path is suitable for Euclidean data \citep{liu2023flow,lipman2023flow,albergo2024stochastic}, it may produce intermediate states that are poorly aligned with the structure of data supported on non-Euclidean domains \citep{fang2026escaping,wyrwal2026topological}.
In MTS anomaly detection, since velocity discrepancies are evaluated throughout the entire path, such structural misalignment can lead to less informative anomaly signals. 
Under the assumption that time-varying graph signals vary smoothly over the relation graph, connected nodes tend to exhibit coherent behavior under normal operation \citep{kalofolias2016learn,dong2016learning,giraldo2022reconstruction}. 
We therefore propose to construct a graph-informed probability path by considering a Lagrangian that combines kinetic energy with graph Dirichlet energy:
\begin{equation}
\mathcal{L}_\mathrm{graph}({\boldsymbol{\Gamma}},\dot{\boldsymbol{\Gamma}},t)=\frac{1}{2}\|\dot{\boldsymbol{\Gamma}}(t)\|^2+\frac{\tau}{2}\mathrm{Tr}\left(\boldsymbol{\Gamma}(t)^\top \mathbf{L}\boldsymbol{\Gamma}(t)\right),
\end{equation}
where the graph Dirichlet energy weight parameter $\tau\geq0$ controls the strength of the graph regularization.
The graph Dirichlet energy term penalizes abrupt variations across connected nodes in the graph at intermediate states.
As a result, higher graph-frequency modes undergo stronger contraction while preserving the endpoints.
The resulting frequency-dependent interpolation therefore treats different graph-frequency modes differently, respecting the phenomenon observed in \cite{liu2026modeling} that anomalies induce heterogeneous behavior across different graph frequency modes. 

To solve the resulting graph-informed variational problem, we first transform the path to the graph Fourier domain as $\hat{\boldsymbol{\Gamma}}(t)=\boldsymbol{\Psi}^\top\boldsymbol{\Gamma}(t)$.
Since $\boldsymbol{\Psi}$ is orthonormal and independent of $t$, we have $\|\dot{\boldsymbol{\Gamma}}(t)\|^2_F=\|\dot{\hat{\boldsymbol{\Gamma}}}(t)\|^2_F$. 
The graph-informed variational problem can therefore be written as follows:
\begin{equation}
\hat{\boldsymbol{\Gamma}}^\star=\argmin_{\hat{\boldsymbol{\Gamma}}(0)=\hat{\mathbf{X}}_0,\hat{\boldsymbol{\Gamma}}(1)=\hat{\mathbf{X}}_1}\int_0^1\frac{1}{2}\sum_{k}\|\dot{\hat{\boldsymbol{\gamma}}}_k(t)\|^2_2+\frac{\tau}{2}\sum_k {\lambda}_k\|\hat{\boldsymbol{\gamma}}_k(t)\|_2^2dt,
\label{eq:graph_variational_problem}
\end{equation}
where $\hat{\boldsymbol{\gamma}}_k(t)$ denote the $k$-th row of $\hat{\boldsymbol{\Gamma}}(t)$. 
Since both the objective and the boundary conditions are separable across different $k$, \textit{i.e.}, different graph frequencies, the problem in \eqref{eq:graph_variational_problem} decomposes into $N$ independent vector-valued variational problems:
\begin{equation}
\hat{\boldsymbol{\gamma}}_k^\star=\argmin_{\hat{\boldsymbol{\gamma}}_k(0)=\hat{\mathbf{x}}_{0,k},\hat{\boldsymbol{\gamma}}_k(1)=\hat{\mathbf{x}}_{1,k}}\int_0^1\frac{1}{2}\|\dot{\hat{\boldsymbol{\gamma}}}_k(t)\|^2_2+\frac{\tau}{2}{\lambda}_k\|\hat{\boldsymbol{\gamma}}_k(t)\|_2^2dt,
\label{eq:separated_variational_problem}
\end{equation}
where $\hat{\mathbf{x}}_{t,k}$ represents the $k$-th row of $\hat{\mathbf{X}}_{t}$.
Let $\omega_k=\sqrt{\tau\lambda_k}$, we have the following result.

\begin{theorem}[Graph-spectral path]\label{prop:graph_path}
    The variational problem in \eqref{eq:separated_variational_problem} admits a unique global minimizer given by
\begin{equation}
\hat{\boldsymbol{\gamma}}_k^\star(t)=\alpha_k(t)\hat{\mathbf{x}}_{0,k}+\beta_k(t)\hat{\mathbf{x}}_{1,k},  
\end{equation}
where 
\begin{equation}
    \alpha_k(t)=\frac{\sinh(\omega_k(1-t))}{\sinh(\omega_k)} \quad\text{and} \quad\beta_k(t)=\frac{\sinh(\omega_kt)}{\sinh(\omega_k)}.
\end{equation}
\end{theorem}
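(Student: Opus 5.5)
Note first that the statement as written needs $\omega_k>0$. When $\lambda_k=0$ (always the case for $k=1$), the formulas are read as the limits $\omega_k\to0$, which give $\alpha_k(t)=1-t$ and $\beta_k(t)=t$. This is the linear case the paper mentions.

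The plan is to show directly that the proposed curve is a global minimizer and that it is unique. The functional in \eqref{eq:separated_variational_problem} is a strictly convex quadratic, since $\tau\lambda_k\ge 0$ and the kinetic term is strictly convex in $\dot{\hat{\boldsymbol{\gamma}}}_k$ under fixed endpoints. So solving the Euler--Lagrange equation and then checking optimality by a completing-the-square argument is enough. I would work on $H^1([0,1];\mathbb{R}^R)$ with the two fixed endpoints. The problem also decouples across the $R$ coordinates, but the vector form works just as well.

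\textbf{Step 1 (Euler--Lagrange).} The Lagrangian is $\tfrac12\|\dot{\boldsymbol{\gamma}}\|^2+\tfrac{\omega_k^2}{2}\|\boldsymbol{\gamma}\|^2$, so the Euler--Lagrange equation is $\ddot{\boldsymbol{\gamma}}=\omega_k^2\boldsymbol{\gamma}$. For $\omega_k>0$ its general solution is $\mathbf{a}\sinh(\omega_k t)+\mathbf{b}\sinh(\omega_k(1-t))$. Imposing $\boldsymbol{\gamma}(0)=\hat{\mathbf{x}}_{0,k}$ and $\boldsymbol{\gamma}(1)=\hat{\mathbf{x}}_{1,k}$ gives $\mathbf{b}=\hat{\mathbf{x}}_{0,k}/\sinh\omega_k$ and $\mathbf{a}=\hat{\mathbf{x}}_{1,k}/\sinh\omega_k$, which is exactly the stated formula. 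Since $\sinh\omega_k\neq0$, the coefficients are well defined.

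\textbf{Step 2 (global optimality and uniqueness).} Take any admissible $\boldsymbol{\gamma}=\hat{\boldsymbol{\gamma}}_k^\star+\mathbf{h}$ with $\mathbf{h}(0)=\mathbf{h}(1)=\mathbf{0}$. Expanding the functional gives $J(\boldsymbol{\gamma})=J(\hat{\boldsymbol{\gamma}}_k^\star)+\int_0^1(\langle\dot{\hat{\boldsymbol{\gamma}}}^\star_k,\dot{\mathbf{h}}\rangle+\omega_k^2\langle\hat{\boldsymbol{\gamma}}^\star_k,\mathbf{h}\rangle)\,dt+\tfrac12\int_0^1(\|\dot{\mathbf{h}}\|^2+\omega_k^2\|\mathbf{h}\|^2)\,dt$. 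Integrate the cross term by parts. The boundary contribution vanishes because $\mathbf{h}$ is zero at both ends, and what remains is $\int\langle-\ddot{\hat{\boldsymbol{\gamma}}}^\star_k+\omega_k^2\hat{\boldsymbol{\gamma}}^\star_k,\mathbf{h}\rangle\,dt=0$ by Step 1. The last term is nonnegative. It is zero only if $\dot{\mathbf{h}}\equiv\mathbf{0}$, and with $\mathbf{h}(0)=\mathbf{0}$ that forces $\mathbf{h}\equiv\mathbf{0}$. This argument works even when $\omega_k=0$, so it gives both global minimality and uniqueness.

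\textbf{Step 3 (degenerate case).} For $\lambda_k=0$ the same argument applies with $\ddot{\boldsymbol{\gamma}}=\mathbf{0}$, and the minimizer is the linear interpolation. This matches the $\omega_k\to0$ limit, since $\sinh(\omega_k t)/\sinh\omega_k\to t$.

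\textbf{Main obstacle.} The only delicate point is making the function class precise, for example absolutely continuous curves with square-integrable derivative, so that the integration by parts is justified. Because $\hat{\boldsymbol{\gamma}}_k^\star$ is smooth and $\mathbf{h}\in H^1_0$, this is standard. Beyond that, the remaining work is a routine calculation.
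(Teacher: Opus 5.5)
Your proposal is correct and follows essentially the same route as the paper. Both solve the Euler--Lagrange equation $\ddot{\hat{\boldsymbol{\gamma}}}_k=\omega_k^2\hat{\boldsymbol{\gamma}}_k$ with the boundary conditions (treating $\omega_k=0$ as the linear limit) and then get global optimality and uniqueness from strict convexity, via the same fact that $\dot{\mathbf{h}}\equiv\mathbf{0}$ with $\mathbf{h}(0)=\mathbf{0}$ forces $\mathbf{h}\equiv\mathbf{0}$; your exact quadratic expansion with integration by parts simply spells out why a solution of the Euler--Lagrange equation is a global minimizer, a step the paper leaves implicit.
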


\begin{wrapfigure}{r}{0.7\columnwidth}
    \centering
    \includegraphics[
        width=\linewidth
    ]{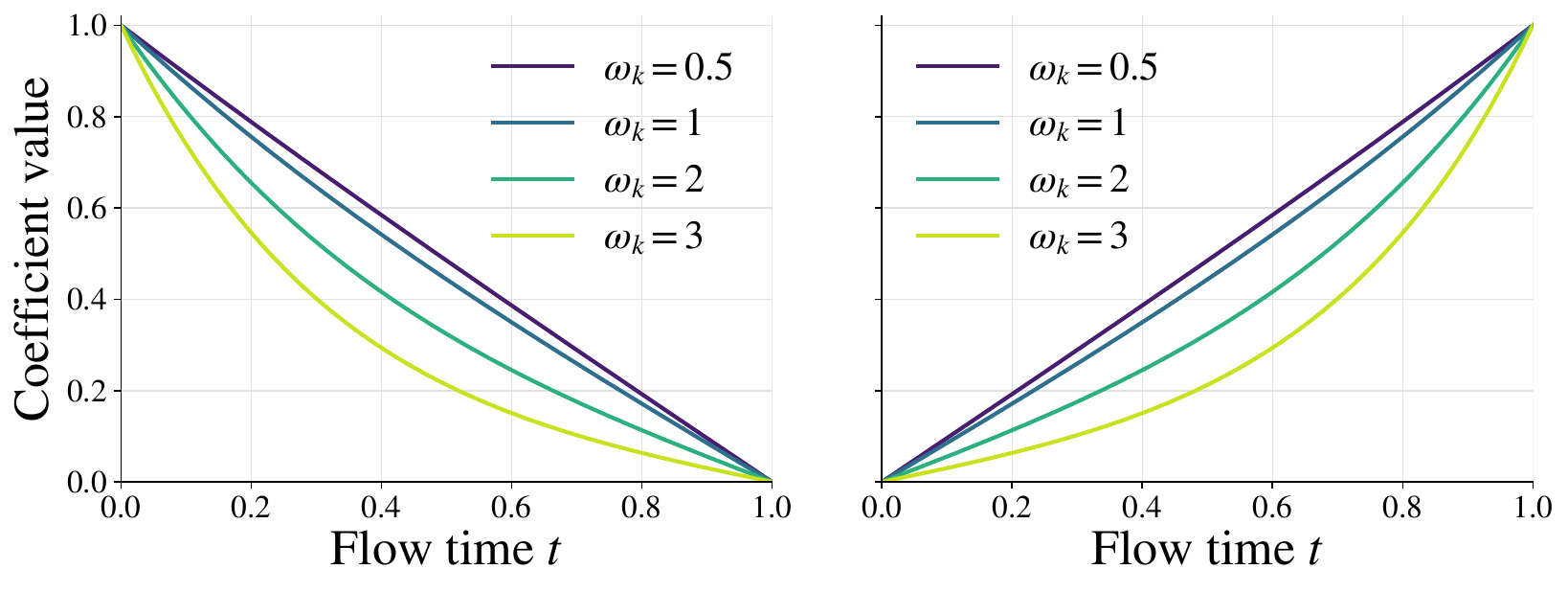}
    \caption{Coefficients $\alpha_k(t)$ (left) and $\beta_k(t)$ (right) with different $\omega_k$. 
    }
    \label{fig:graph_spectral_coefficients}
    \vspace{-8pt}
\end{wrapfigure}
The proof of Theorem \ref{prop:graph_path} is provided in Appendix \ref{proof:graph_path}.
Figure \ref{fig:graph_spectral_coefficients} visualizes the source coefficient $\alpha_k(t)$ and target coefficient $\beta_k(t)$ for different values of $\omega_k$.
The source coefficient $\alpha_k(t)$ decreases from one to zero, whereas the target coefficient $\beta_k(t)$ increases from zero to one. 
Unlike the standard linear path, the graph-spectral path assigns different interpolation schedules to different graph frequencies.
Specifically, larger values of $\omega_k$, corresponding to higher graph frequencies or stronger graph regularization, produce stronger contraction at intermediate flow times. 
The resulting path therefore introduces a graph-dependent structural inductive bias into flow matching.
Based on Theorem \ref{prop:graph_path}, we obtain the corresponding graph-spectral velocity field:
\begin{equation}
\dot{\hat{\boldsymbol{\gamma}}}_k^\star(t)=\dot{\alpha}_k(t)\hat{\mathbf{x}}_{0,k}+\dot{\beta}_k(t)\hat{\mathbf{x}}_{1,k},
\label{eq:spectral_velocity}
\end{equation}
where 
\begin{equation}
\dot{\alpha}_{k}(t)=-\frac{\omega_k\cosh(\omega_k(1-t))}{\sinh(\omega_k)}\quad \text{and}\quad
\dot{\beta}_{k}(t)=\frac{\omega_k\cosh(\omega_kt)}{\sinh(\omega_k)}.
\end{equation}
\begin{remark}
When $\omega_k=0$, $\alpha_k(t)$ and $\beta_k(t)$ are defined by their continuous limit, \textit{i.e.}, $\alpha_k(t)$ becomes $1-t$ and $\beta_k(t)$ becomes $t$.
The corresponding graph-frequency mode therefore degenerates to the standard linear path.
Since the normalized graph Laplacian always has at least one zero eigenvalue with its multiplicity equal to the number of connected components  \citep{chung1997spectral,shuman2013emerging}, the case for $\lambda_k=0$, \textit{i.e.}, $\omega_k=0$, always occurs for at least one graph-frequency mode.
\end{remark}
The graph-spectral path obtained in Theorem \ref{prop:graph_path} can be written in matrix form as
\begin{equation}
\hat{\boldsymbol{\Gamma}}^\star(t)=\mathbf{A}(t)\hat{\boldsymbol{\Gamma}}(0)+\mathbf{B}(t)\hat{\boldsymbol{\Gamma}}(1) 
\end{equation}
with
\begin{equation}
     \mathbf{A}(t)=\mathrm{diag}\left(\alpha_1(t),\ldots,\alpha_N(t)\right)\quad \text{and} \quad
     \mathbf{B}(t)= \mathrm{diag}\left(\beta_1(t),\ldots,\beta_N(t)\right).
\end{equation}
Performing the inverse graph Fourier transform gives the corresponding node-domain path:
\begin{equation}
\boldsymbol{\Gamma}^\star(t)=\boldsymbol{\Psi}\hat{\boldsymbol{\Gamma}}^\star(t)=\boldsymbol{\Psi}\mathbf{A}(t)\boldsymbol{\Psi}^\top\boldsymbol{\Gamma}(0)+\boldsymbol{\Psi}\mathbf{B}(t)\boldsymbol{\Psi}^\top\boldsymbol{\Gamma}(1)=\boldsymbol{\Psi}\mathbf{A}(t)\boldsymbol{\Psi}^\top\mathbf{X}_0+\boldsymbol{\Psi}\mathbf{B}(t)\boldsymbol{\Psi}^\top\mathbf{X}_1.  
\end{equation}
Then the corresponding node-domain velocity field is defined by
\begin{equation}
\dot{\boldsymbol{\Gamma}}^\star(t)=\boldsymbol{\Psi}\dot{\mathbf{A}}(t)\boldsymbol{\Psi}^\top\mathbf{X}_0+\boldsymbol{\Psi}\dot{\mathbf{B}}(t)\boldsymbol{\Psi}^\top\mathbf{X}_1, 
\end{equation}
where
\begin{equation}
     \dot{\mathbf{A}}(t)=\mathrm{diag}\left(\dot{\alpha}_1(t),\ldots,\dot{\alpha}_N(t)\right)\quad \text{and} \quad
     \dot{\mathbf{B}}(t)= \mathrm{diag}\left(\dot{\beta}_1(t),\ldots,\dot{\beta}_N(t)\right).
\end{equation}

\subsection{Model Training and Anomaly Scoring}
\label{sec:spectral_mixer}

We parameterize the velocity field associated with the graph-spectral path using a lightweight MLP architecture, similar to TSMixer as introduced in \citep{chen2023tsmixer}. 
The details of the model are provided in Appendix \ref{app:velocity_model}.
We train the velocity model in node domain with the regression objective:
\begin{equation}
\mathcal{L}_\mathsf{GRASP}(\boldsymbol{\theta}) = \mathbb{E}_{t \sim \mathcal{U}[0,1], \mathbf{X}_0\sim p_0, \mathbf{X}_1\sim p} \Big[
 \left\Vert \mathbf{V}_t(\mathbf{X}_t; {\boldsymbol{\theta}}) -\dot{\boldsymbol{\Gamma}}^\star(t) \right\Vert^2 \Big].
 \label{eq:loss}
\end{equation}
Suppose that the normal data follows the distribution $p$, we define the oracle marginal velocity in the node and graph Fourier domains as
\begin{equation}
\mathbf{V}^p_t(\mathbf{X})=\mathbb E[\dot{\boldsymbol{\Gamma}}^\star(t)\mid \mathbf{X}_t=\mathbf{X}] \quad \text{and} \quad \hat{\mathbf{V}}^p_t(\mathbf{X})=\mathbb E[\dot{\hat{\boldsymbol{\Gamma}}}^\star(t)\mid \mathbf{X}_t=\mathbf{X}].
\end{equation}
Similarly, the posterior means of the data endpoint in the node and graph Fourier domains are
\begin{equation}
\mathbf{M}^p_{t}(\mathbf{X})=\mathbb E[\boldsymbol{\Gamma}^\star(1)\mid \mathbf{X}_t=\mathbf{X}] \quad \text{and} \quad
\hat{\mathbf{M}}^p_{t}(\mathbf{X})=\mathbb E[\hat{\boldsymbol{\Gamma}}^\star(1)\mid \mathbf{X}_t=\mathbf{X}].
\end{equation}
Let $\hat{\mathbf{v}}^p_{t,k}$ and $\hat{\boldsymbol{\mu}}^p_{t,k}$ denote the $k$-th row of $\hat{\mathbf{V}}^p_t(\mathbf{X})$ and $\hat{\mathbf{M}}^p_{t}(\mathbf{X})$. Then we have the following result.

\begin{lemma}[Graph-spectral residual scaling]\label{prop:target_velocity_scaling}
For every $t\in(0,1)$ and graph-frequency mode $k$, the conditional velocity residual and the endpoint posterior residual in the graph Fourier domain satisfy
\begin{equation}
    \dot{\hat{\boldsymbol{\gamma}}}_k^\star(t)-\hat{\mathbf{v}}^p_{t,k}(\mathbf{X})=\rho_k(t)\left(\hat{\mathbf{x}}_{1,k}-\hat{\boldsymbol{\mu}}^p_{t,k}(\mathbf{X})\right) \quad \text{with}\quad\rho_k(t)=\frac{\omega_k}{\sinh(\omega_k(1-t))}.
\end{equation}
\end{lemma}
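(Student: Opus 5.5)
The plan is to eliminate the source coefficient $\hat{\mathbf{x}}_{0,k}$ from the conditional velocity using the conditional state itself, write the velocity as an affine function of the current state and the data endpoint, and then take conditional expectations given $\mathbf{X}_t=\mathbf{X}$.

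\textbf{Step 1 (condition on the spectral state).} The node-domain state determines the spectral state. Because $\boldsymbol{\Psi}$ is a fixed orthogonal matrix, conditioning on $\mathbf{X}_t=\mathbf{X}$ is equivalent to conditioning on $\hat{\mathbf{X}}_t=\boldsymbol{\Psi}^\top\mathbf{X}$. In particular, the $k$-th row $\hat{\boldsymbol{\gamma}}^\star_k(t)=\boldsymbol{\psi}_k^\top\mathbf{X}_t$ is a deterministic function of $\mathbf{X}$; call it $\hat{\mathbf{x}}_k$.

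\textbf{Step 2 (rewrite the velocity).} For $t\in(0,1)$, Theorem~\ref{prop:graph_path} gives $\alpha_k(t)>0$. When $\omega_k>0$ this follows from $\sinh(\omega_k(1-t))>0$, and when $\omega_k=0$ we have $\alpha_k(t)=1-t>0$. So we can solve for the source coefficient:
\[
\hat{\mathbf{x}}_{0,k}=\frac{\hat{\boldsymbol{\gamma}}^\star_k(t)-\beta_k(t)\hat{\mathbf{x}}_{1,k}}{\alpha_k(t)}.
\]
Substituting this into \eqref{eq:spectral_velocity} gives
\[
\dot{\hat{\boldsymbol{\gamma}}}^\star_k(t)=\frac{\dot\alpha_k(t)}{\alpha_k(t)}\hat{\boldsymbol{\gamma}}^\star_k(t)+\Big(\dot\beta_k(t)-\frac{\dot\alpha_k(t)\beta_k(t)}{\alpha_k(t)}\Big)\hat{\mathbf{x}}_{1,k}.
\]
The coefficient of $\hat{\mathbf{x}}_{1,k}$ is $(\dot\beta_k\alpha_k-\dot\alpha_k\beta_k)/\alpha_k$. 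With the explicit hyperbolic formulas, the numerator is
\[
\frac{\omega_k\big[\cosh(\omega_k t)\sinh(\omega_k(1-t))+\cosh(\omega_k(1-t))\sinh(\omega_k t)\big]}{\sinh^2(\omega_k)}.
\]
By the addition formula $\sinh(a+b)=\sinh a\cosh b+\cosh a\sinh b$ with $a+b=\omega_k$, this equals $\omega_k/\sinh(\omega_k)$. Dividing by $\alpha_k(t)=\sinh(\omega_k(1-t))/\sinh(\omega_k)$ gives exactly $\rho_k(t)=\omega_k/\sinh(\omega_k(1-t))$. This Wronskian-type identity is the only real computation. For $\omega_k=0$, I would either check it directly ($\alpha_k=1-t$, $\beta_k=t$ give coefficient $1/(1-t)$) or pass to the continuous limit, since $\omega/\sinh(\omega(1-t))\to 1/(1-t)$.

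\textbf{Step 3 (take conditional expectations).} Take $\mathbb{E}[\,\cdot\mid\mathbf{X}_t=\mathbf{X}]$ of the affine identity from Step 2. The $k$-th row of $\hat{\mathbf{V}}^p_t(\mathbf{X})$ is the conditional mean of $\dot{\hat{\boldsymbol{\gamma}}}^\star_k(t)$, and the $k$-th row of $\hat{\mathbf{M}}^p_t(\mathbf{X})$ is the conditional mean of $\hat{\mathbf{x}}_{1,k}$. The state term $\hat{\boldsymbol{\gamma}}^\star_k(t)$ is fixed under this conditioning by Step 1. Hence
\[
\hat{\mathbf{v}}^p_{t,k}(\mathbf{X})=\frac{\dot\alpha_k}{\alpha_k}\hat{\mathbf{x}}_k+\rho_k(t)\,\hat{\boldsymbol{\mu}}^p_{t,k}(\mathbf{X}).
\]
Subtracting this from the pathwise identity cancels the state term and yields the claim. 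Integrability only needs finite second moments of $p$ and the Gaussian source.

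The main obstacle is the coefficient simplification in Step 2 together with the degenerate $\omega_k=0$ case. Everything else is linearity of conditional expectation plus the observation that the state is $\mathbf{X}_t$-measurable.
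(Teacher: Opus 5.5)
Your proposal is correct and follows essentially the same route as the paper: use $\alpha_k(t)>0$ to eliminate $\hat{\mathbf{x}}_{0,k}$, identify the coefficient $\rho_k(t)=\dot{\beta}_k(t)-\dot{\alpha}_k(t)\beta_k(t)/\alpha_k(t)$ via the $\sinh$ addition formula, and then take conditional expectations given $\mathbf{X}_t=\mathbf{X}$ so that the state term cancels. Two points the paper leaves implicit (it only remarks on the continuous limit for $\omega_k=0$) are handled explicitly in your version: the direct check of the $\omega_k=0$ case, and the observation that conditioning on $\mathbf{X}_t$ is equivalent to conditioning on $\hat{\mathbf{X}}_t$.
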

The proof of Lemma \ref{prop:target_velocity_scaling} is provided in Appendix \ref{proof:target_velocity_scaling}. Note that when $\omega_k=0$, $\rho_k(t)$ is defined by its continuous limit $\frac{1}{1-t}$.
Lemma \ref{prop:target_velocity_scaling} shows that the velocity residual rescales the endpoint residual by a factor that varies across both flow times and graph frequencies.

The conditional velocity residual in the graph Fourier domain can be written in matrix form as
\begin{equation}
    \dot{\hat{\boldsymbol{\Gamma}}}^\star(t)-\hat{\mathbf{V}}^p_{t}(\mathbf{X})=\mathbf{P}(t)\left(\hat{\mathbf{X}}_{1}-\hat{\mathbf{M}}^p_{t}(\mathbf{X})\right) \quad \text{with}  \quad \mathbf{P}(t)=\mathrm{diag}\left(\rho_1(t),\ldots,\rho_N(t)\right).
\end{equation}
Performing the inverse graph Fourier transform gives the node-domain conditional velocity residual:
\begin{equation}
    \dot{\boldsymbol{\Gamma}}^\star(t)-\mathbf{V}^p_{t}(\mathbf{X})=\boldsymbol{\Psi}\mathbf{P}(t)\boldsymbol{\Psi}^\top\left(\mathbf{X}_{1}-\mathbf{M}^p_{t}(\mathbf{X})\right).
\end{equation}
According to Theorem \ref{prop:graph_path}, we can see that the amount of endpoint information contained in an intermediate state also varies across flow times and graph frequencies. 
Consider a rescaled observation:
\begin{equation}
\frac{1}{\beta_k(t)}\hat{\boldsymbol{\gamma}}_k^\star(t)=\frac{\alpha_k(t)}{\beta_k(t)}\hat{\mathbf{x}}_{0,k}+\hat{\mathbf{x}}_{1,k}.
\end{equation}
Since the prior follows a standard Gaussian distribution, conditioning on the endpoint gives
\begin{equation}
    \frac{1}{\beta_k(t)}\hat{\boldsymbol{\gamma}}_k^\star(t)\mid \hat{\mathbf{x}}_{1,k}\sim\mathcal{N}\left(\hat{\mathbf{x}}_{1,k},\frac{\alpha_k^2(t)}{\beta_k^2(t)}\mathbf{I}\right).
\label{eq:condition_endpoint_Gaussian}
\end{equation}
The conditional variance in \eqref{eq:condition_endpoint_Gaussian} quantifies the effective noise level of the rescaled intermediate state with respect to the endpoint.
To account for both this noise level and the residual scaling elaborated in Lemma \ref{prop:target_velocity_scaling}, we define the anomaly score weight as the inverse effective noise variance multiplied by the inverse squared residual-scaling factor as follows:
\begin{equation}
    \eta_k(t)=\frac{\beta_k^2(t)}{\alpha_k^2(t)}\frac{1}{\rho^2_k(t)}=\frac{\sinh^2(\omega_kt)}{\sinh^2(\omega_k(1-t))}\frac{\sinh^2(\omega_k(1-t))}{\omega^2_k}=\frac{\sinh^2(\omega_kt)}{\omega^2_k}.
\end{equation}
\begin{wrapfigure}{r}{0.4\columnwidth}
    \centering
    \includegraphics[
        width=\linewidth
    ]{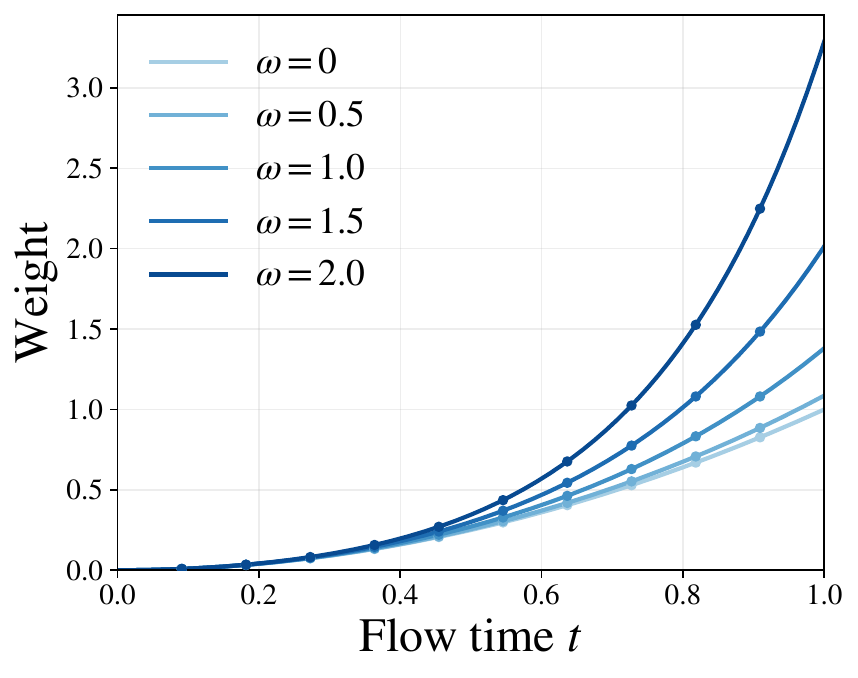}
    \vspace{-0.7cm}
    \caption{Weight schedule of $\eta_k(t)$.\label{fig:weight}
    }
\end{wrapfigure}
When $\omega_k=0$, the weight $\eta_k(t)$ is defined by its continuous limits $t^2$.
Figure \ref{fig:weight} illustrates the weighting schedule for different values of $\omega$.
From the figure, we observe that the weighting schedule assigns larger coefficients to velocity discrepancies evaluated closer to the data endpoint and at higher graph frequencies.
Given a test time window $\mathbf{X}_1$, we independently draw source samples $\mathbf{X}_0$ from the standard Gaussian distribution to form a finite prior set $\mathcal{X}_\mathrm{ad}$. 
Let $\mathcal{R}_\mathrm{ad}\subset(0,1)$ denote the set of flow times used for evaluation.
For each source sample and evaluation time, we construct the intermediate state and conditional target velocity using the graph-spectral path. 
We then aggregate the weighted squared velocity discrepancies across source samples, flow times, and graph-frequency modes to obtain the anomaly score as follows:
\begin{equation}
\xi(\mathcal{X}_\mathrm{ad},\mathcal{R}_\mathrm{ad},\mathbf{X}_1) = \sum_{k=1}^N\sum_{t\in\mathcal{R}_\mathrm{ad}}\sum_{\mathbf{X}_0\in\mathcal{X}_\mathrm{ad}} \eta_k(t)
 \left\Vert \hat{\mathbf{v}}_{t,k}(\mathbf{X}_t; {\boldsymbol{\theta}}) -\dot{\hat{\boldsymbol{\gamma}}}_k^\star(t) \right\Vert^2_2 .
 \label{eq:anomaly_score}
\end{equation}
A larger value of $\xi$ indicates a greater deviation from the velocity field learned from normal data and hence provide stronger evidence of anomalous behavior.
\subsection{Theoretical Analyses}
The graph Fourier basis is not unique when the graph Laplacian has repeated eigenvalues. 
More precisely, the eigenvectors within each repeated-eigenvalue eigenspace are identifiable only up to an orthogonal transformation \citep{agaskar2013spectral,sandryhaila2014discrete,deri2017spectral}.
Let $\tilde{\boldsymbol{\Psi}}=\boldsymbol{\Psi}\mathbf{Q}$ denote an alternative orthonormal eigenbasis, where $\mathbf{Q}$ is block orthogonal and acts only within eigenspaces associated with repeated eigenvalues.
The same graph Laplacian then admits the eigendecomposition
$
\mathbf{L}=\tilde{\boldsymbol{\Psi}}\mathbf{\Lambda}\tilde{\boldsymbol{\Psi}}^{\top}.
$
The following result shows that this potential ambiguity of the graph Fourier basis does not affect the velocity field model, the target velocity, and the anomaly score used by GRASP.

\begin{prop}[Eigenbasis invariance]\label{prop:eigenbasis_invariance}
In GRASP, the node-domain graph-spectral path, its conditional target velocity, and the anomaly score in \eqref{eq:anomaly_score} are all invariant to the choice of orthonormal Laplacian eigenbasis.
\end{prop}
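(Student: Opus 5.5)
The plan is to show that every quantity in GRASP enters only through spectral functions of $\mathbf{L}$ of the form $\boldsymbol{\Psi} f(\mathbf{\Lambda})\boldsymbol{\Psi}^\top$, and that such a matrix does not depend on the chosen eigenbasis. First I will establish the key identity. Let $\tilde{\boldsymbol{\Psi}}=\boldsymbol{\Psi}\mathbf{Q}$ with $\mathbf{Q}$ block orthogonal, where each block acts on an index set with a common eigenvalue. Let $\mathbf{D}=f(\mathbf{\Lambda})=\mathrm{diag}(f(\lambda_1),\ldots,f(\lambda_N))$ for any scalar function $f$. Then $\mathbf{D}$ is constant on each block, so $\mathbf{Q}$ commutes with $\mathbf{D}$, that is, $\mathbf{Q}\mathbf{D}\mathbf{Q}^\top=\mathbf{D}$. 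Consequently
\begin{equation*}
\tilde{\boldsymbol{\Psi}}\mathbf{D}\tilde{\boldsymbol{\Psi}}^\top=\boldsymbol{\Psi}\mathbf{Q}\mathbf{D}\mathbf{Q}^\top\boldsymbol{\Psi}^\top=\boldsymbol{\Psi}\mathbf{D}\boldsymbol{\Psi}^\top .
\end{equation*}
The only care needed is that the diagonal entries are genuinely functions of $\lambda_k$ alone, including the continuous-limit conventions at $\omega_k=0$.

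Next I will apply this identity to the path and the velocity. The quantities $\alpha_k(t)$, $\beta_k(t)$, $\dot\alpha_k(t)$ and $\dot\beta_k(t)$ depend on $k$ only through $\omega_k=\sqrt{\tau\lambda_k}$. Hence $\mathbf{A}(t)$, $\mathbf{B}(t)$, $\dot{\mathbf{A}}(t)$ and $\dot{\mathbf{B}}(t)$ are spectral functions. The node-domain formulas $\boldsymbol{\Gamma}^\star(t)=\boldsymbol{\Psi}\mathbf{A}\boldsymbol{\Psi}^\top\mathbf{X}_0+\boldsymbol{\Psi}\mathbf{B}\boldsymbol{\Psi}^\top\mathbf{X}_1$ and $\dot{\boldsymbol{\Gamma}}^\star(t)$ are therefore unchanged for fixed $(\mathbf{X}_0,\mathbf{X}_1)$. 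As a consequence, the training loss \eqref{eq:loss} is unchanged, and the trained node-domain predictor $\mathbf{V}_t$ can be taken to be the same function. The prior $\mathcal{N}(0,\mathbf{I})$ is rotation invariant, so the sampled $\mathbf{X}_0$ need not be transformed at all; it stays a node-domain object.

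The main step, and the one I expect to be most delicate, is the anomaly score \eqref{eq:anomaly_score}. The score is written as a sum over individual frequency indices $k$, and individual rows $\hat{\mathbf{v}}_{t,k}$ are \emph{not} basis invariant. I will rewrite the inner sum over $k$ as a single matrix expression. I define $\hat{\boldsymbol{\Delta}}=\boldsymbol{\Psi}^\top(\mathbf{V}_t(\mathbf{X}_t;\boldsymbol{\theta})-\dot{\boldsymbol{\Gamma}}^\star(t))$, where $\hat{\mathbf{v}}_{t,k}$ is understood as the $k$-th row of $\boldsymbol{\Psi}^\top\mathbf{V}_t$. With $\mathbf{H}(t)=\mathrm{diag}(\eta_1(t),\ldots,\eta_N(t))$, I then get
\begin{equation*}
\sum_k \eta_k(t)\|\hat{\boldsymbol{\delta}}_k\|_2^2=\mathrm{Tr}\big(\hat{\boldsymbol{\Delta}}^\top\mathbf{H}(t)\hat{\boldsymbol{\Delta}}\big)=\mathrm{Tr}\big(\boldsymbol{\Delta}^\top\boldsymbol{\Psi}\mathbf{H}(t)\boldsymbol{\Psi}^\top\boldsymbol{\Delta}\big),
\end{equation*}
where $\boldsymbol{\Delta}$ is the node-domain discrepancy. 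Since $\eta_k(t)=\sinh^2(\omega_k t)/\omega_k^2$, with limit $t^2$ at $\omega_k=0$, is a function of $\lambda_k$, the matrix $\boldsymbol{\Psi}\mathbf{H}\boldsymbol{\Psi}^\top$ is invariant by the identity above. The node-domain $\boldsymbol{\Delta}$ is invariant by the previous step.

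Summing over $t\in\mathcal{R}_\mathrm{ad}$ and $\mathbf{X}_0\in\mathcal{X}_\mathrm{ad}$ preserves the invariance, which concludes the argument. I will remark that the per-mode terms can differ across bases, while their weighted sum cannot. This is exactly because the weights coincide within each repeated eigenspace.
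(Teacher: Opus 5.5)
Your proposal is correct and follows essentially the same route as the paper. Both arguments rest on the block-orthogonal $\mathbf{Q}$ commuting with diagonal matrices whose entries depend on $k$ only through $\lambda_k$, which gives invariance of the path and target velocity, and both rewrite the inner sum of the score as a weighted trace with $\mathbf{H}(t)=\operatorname{diag}(\eta_1(t),\ldots,\eta_N(t))$; your step of pulling $\mathbf{H}(t)$ back to $\boldsymbol{\Psi}\mathbf{H}(t)\boldsymbol{\Psi}^{\top}$ is the same computation as the paper's $\mathbf{Q}\mathbf{H}(t)\mathbf{Q}^{\top}=\mathbf{H}(t)$, packaged as a single reusable identity.
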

The proof of Proposition \ref{prop:eigenbasis_invariance} is provided in Appendix \ref{proof:eigenbasis_invariance}.
Under the population-optimal squared flow matching objective, the learned velocity predictor satisfies
\begin{equation}
    \hat{\mathbf{V}}_{t}(\mathbf{X}_t; {\boldsymbol{\theta}})=\hat{\mathbf{V}}^p_t(\mathbf{X}_t).
\end{equation}
The trained velocity model approximates this population-optimal predictor.
We therefore analyze the anomaly score obtained with this oracle normal data velocity $\hat{\mathbf{V}}^p_t$.
Let $p_t(\hat{\mathbf{X}})$ and $q_t(\hat{\mathbf{X}})$ denote the graph-spectral path marginals induced by the normal endpoint distribution $p$ and test endpoint distribution $q$, respectively.
We define their spectral score matrices as
\begin{equation}
\hat{\mathbf{S}}_{t}^p(\hat{\mathbf{X}})=\nabla_{\hat{\mathbf{X}}}\log p_t(\hat{\mathbf{X}}),\quad \hat{\mathbf{S}}_{t}^q(\hat{\mathbf{X}})=\nabla_{\hat{\mathbf{X}}}\log q_t(\hat{\mathbf{X}}),
\end{equation}
whose $k$-th rows are denoted as $\hat{\mathbf{s}}_{t,k}^p(\hat{\mathbf{X}})$ and $\hat{\mathbf{s}}_{t,k}^q(\hat{\mathbf{X}})$, respectively.
We define $\hat{\boldsymbol{\mu}}^q_{t,k}$ analogously to $\hat{\boldsymbol{\mu}}^p_{t,k}$.
In the following, we show that the expected oracle anomaly score of GRASP admits a decomposition that contains a term that measures deviations between normal and test distributions.

\begin{theorem}[Decomposition of the expected oracle anomaly score]
\label{prop:anomaly_score}
Let $\mathcal X_{\mathrm{ad}}$ contain a finite set of i.i.d. source samples from $p_0$, independent of $\mathbf X_1$, and let
$\mathcal R_{\mathrm{ad}}\subset(0,1)$ be a finite set of evaluation times.
Consider the anomaly score in \eqref{eq:anomaly_score} evaluated using
the oracle normal-data velocity predictor $V_t^p$.
Its expectation over $\mathcal X_{\mathrm{ad}}$ and $\mathbf X_1\sim q$
decomposes as
\begin{equation}
\begin{aligned}
\mathbb E_{\mathcal X_{\mathrm{ad}},q}
\left[\xi(\mathcal X_{\mathrm{ad}},
\mathcal R_{\mathrm{ad}},\mathbf X_1)\right] =&
\sum_{k=1}^{N}\sum_{t\in\mathcal R_{\mathrm{ad}}}
|\mathcal{X}_\mathrm{ad}|\Biggl(
\frac{\sinh^2(\omega_k t)}
     {\sinh^2(\omega_k(1-t))}
\mathbb E_{p_0,q}
\left[
\left\|\hat{\mathbf x}_{1,k}
-\hat{\boldsymbol\mu}^{q}_{t,k}(\mathbf X)\right\|_2^2
\right] \\
&+
\frac{\sinh^2(\omega_k(1-t))}
     {\sinh^2(\omega_k)}
\mathbb E_{q_t}
\left[
\left\|\hat{\mathbf s}^{q}_{t,k}(\hat{\mathbf X})
-\hat{\mathbf s}^{p}_{t,k}(\hat{\mathbf X})\right\|_2^2
\right]
\Biggr),
\end{aligned}
\label{eq:anomaly_score_decomposition}
\end{equation}
where $\mathbb E_{p_0,q}$ denotes expectation over independent $\mathbf X_0\sim p_0$ and $\mathbf X_1\sim q$ and the coefficients are defined through their continuous limits when $\omega_k=0$.
\end{theorem}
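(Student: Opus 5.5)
The plan is to fix a mode $k$ and a time $t\in\mathcal R_{\mathrm{ad}}$ and compute the expectation of one summand $\eta_k(t)\|\hat{\mathbf v}^p_{t,k}(\mathbf X_t)-\dot{\hat{\boldsymbol\gamma}}_k^\star(t)\|_2^2$. Here $\mathbf X_0\sim p_0$ and $\mathbf X_1\sim q$ are independent, and $\mathbf X_t$ is built from the graph-spectral path of Theorem~\ref{prop:graph_path}. The source samples in $\mathcal X_{\mathrm{ad}}$ are i.i.d.\ and independent of $\mathbf X_1$, so every element of $\mathcal X_{\mathrm{ad}}$ contributes the same expectation. Linearity then produces the factor $|\mathcal X_{\mathrm{ad}}|$ and the double sum over $k$ and $t$.

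Two preliminary facts are needed. First, $\boldsymbol\Psi$ is orthogonal, so $\hat{\mathbf X}_0=\boldsymbol\Psi^\top\mathbf X_0$ again has i.i.d.\ standard Gaussian entries. Second, $\mathbf X_t\mapsto\hat{\mathbf X}_t$ is a bijection, so conditioning on $\mathbf X_t$ is the same as conditioning on $\hat{\mathbf X}_t$. Hence each mode satisfies $\hat{\mathbf x}_{t,k}=\alpha_k\hat{\mathbf x}_{0,k}+\beta_k\hat{\mathbf x}_{1,k}$ with Gaussian noise, and the marginals $p_t,q_t$ are Gaussian convolutions of $p,q$.

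\textbf{Step 1 (bias--variance split).} Let $\hat{\mathbf v}^q_{t,k}$ denote the oracle velocity defined with endpoint distribution $q$. We write
\[
\hat{\mathbf v}^p_{t,k}-\dot{\hat{\boldsymbol\gamma}}_k^\star=(\hat{\mathbf v}^p_{t,k}-\hat{\mathbf v}^q_{t,k})+(\hat{\mathbf v}^q_{t,k}-\dot{\hat{\boldsymbol\gamma}}_k^\star).
\]
Under the test law, $\mathbb E[\dot{\hat{\boldsymbol\gamma}}_k^\star\mid\mathbf X_t]=\hat{\mathbf v}^q_{t,k}$. The first bracket is a function of $\mathbf X_t$ alone, so the cross term vanishes by the tower property. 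For the second bracket we apply Lemma~\ref{prop:target_velocity_scaling} with $q$ in place of $p$; its proof uses nothing specific to $p$. This gives $\rho_k(t)(\hat{\mathbf x}_{1,k}-\hat{\boldsymbol\mu}^q_{t,k})$. Multiplying by $\eta_k\rho_k^2=\beta_k^2/\alpha_k^2=\sinh^2(\omega_kt)/\sinh^2(\omega_k(1-t))$ yields the endpoint-uncertainty term.

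\textbf{Step 2 (velocity gap as score gap).} This is the main computation. Substituting $\hat{\mathbf x}_{0,k}=(\hat{\mathbf x}_{t,k}-\beta_k\hat{\mathbf x}_{1,k})/\alpha_k$ into \eqref{eq:spectral_velocity} and taking the conditional expectation gives
\[
\hat{\mathbf v}^{r}_{t,k}=\frac{\dot\alpha_k}{\alpha_k}\hat{\mathbf x}_{t,k}+\Big(\dot\beta_k-\frac{\dot\alpha_k\beta_k}{\alpha_k}\Big)\hat{\boldsymbol\mu}^{r}_{t,k},\qquad r\in\{p,q\}.
\]
A hyperbolic addition identity simplifies the second coefficient to $\rho_k(t)$. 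Tweedie's formula for the Gaussian mode-wise channel gives $\hat{\boldsymbol\mu}^r_{t,k}=(\hat{\mathbf x}_{t,k}+\alpha_k^2\hat{\mathbf s}^r_{t,k})/\beta_k$. Since the unit-covariance Gaussian factorizes over modes, the $k$-th row of the full score is exactly the mode-$k$ score. Therefore
\[
\hat{\mathbf v}^p_{t,k}-\hat{\mathbf v}^q_{t,k}=\rho_k\frac{\alpha_k^2}{\beta_k}(\hat{\mathbf s}^p_{t,k}-\hat{\mathbf s}^q_{t,k}).
\]
Multiplying the squared norm by $\eta_k$ gives the coefficient $\alpha_k^2=\sinh^2(\omega_k(1-t))/\sinh^2(\omega_k)$. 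The expectation is over $\mathbf X_t$, whose law is $q_t$ under test data, which yields the Fisher term.

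\textbf{Step 3 (wrap-up).} It remains to handle the $\omega_k=0$ modes by continuity of all coefficients in $\omega_k$, and to note the integrability needed for the tower property and for Tweedie's formula. The latter requires finite second moments of $q$; for $t\in(0,1)$ we have $\alpha_k,\beta_k>0$, so everything is smooth. I expect the main care point to be Step 2. One must verify the coefficient identity and justify that the score of the joint marginal restricted to row $k$ equals the mode-$k$ Tweedie score even though the endpoint rows are dependent. This holds because Tweedie's formula is applied to the whole matrix with isotropic noise covariance $\alpha_k^2$ per row, and the conditional mean is then read off row by row.
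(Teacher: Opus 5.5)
Your proposal is correct and is essentially the paper's proof. Via $\hat{\mathbf v}^r_{t,k}=\frac{\dot\alpha_k}{\alpha_k}\hat{\mathbf x}_{t,k}+\rho_k\hat{\boldsymbol\mu}^r_{t,k}$, your velocity-level split $(\hat{\mathbf v}^p_{t,k}-\hat{\mathbf v}^q_{t,k})+(\hat{\mathbf v}^q_{t,k}-\dot{\hat{\boldsymbol\gamma}}_k^\star(t))$ is exactly the paper's adding and subtracting of $\hat{\boldsymbol\mu}^q_{t,k}$ inside Lemma~\ref{prop:target_velocity_scaling}, and the rest matches too: the same Tweedie identity, the same tower-property cancellation of the cross term, and the same coefficient algebra $\eta_k\rho_k^2=\beta_k^2/\alpha_k^2$ and $\eta_k\rho_k^2\alpha_k^4/\beta_k^2=\alpha_k^2$. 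The finite-second-moment assumption on $q$ that you flag in Step 3 is not needed, and the statement does not assume it: for $t\in(0,1)$ the Gaussian kernel and its gradient are bounded, so Tweedie's formula holds for arbitrary $q$; the posterior given $\mathbf X_t$ then has all moments, so you can cancel the cross term conditionally on $\mathbf X_t$ pointwise and integrate the nonnegative remainder, which gives the identity in $[0,\infty]$.
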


\begin{corollary}[Boundedness of the endpoint-uncertainty term]
\label{prop:uncertainty_bound}
Under the setting of Theorem~\ref{prop:anomaly_score}, assume that
$\mathbb E_{\mathbf X_1\sim q}[\|\mathbf X_1\|_F^2]<\infty$.
Then, for any finite set of evaluation times
$\mathcal R_{\mathrm{ad}}\subset(0,1)$, the expected
endpoint-uncertainty term in
\eqref{eq:anomaly_score_decomposition} satisfies
\begin{equation}
\sum_{k=1}^{N}\sum_{t\in\mathcal R_{\mathrm{ad}}}
|\mathcal{X}_\mathrm{ad}|\frac{\sinh^2(\omega_k t)}
        {\sinh^2(\omega_k(1-t))}
\mathbb E_{p_0,q}
\left[
\left\|\hat{\mathbf x}_{1,k}
-\hat{\boldsymbol\mu}^{q}_{t,k}(\mathbf X)\right\|_2^2
\right]
\leq |\mathcal{X}_\mathrm{ad}||\mathcal R_{\mathrm{ad}}|NR.
\end{equation}
\end{corollary}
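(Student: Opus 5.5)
The plan is to bound each $(k,t)$ summand separately by $R$, using the fact that the posterior mean is the minimum mean-squared-error estimator of the data endpoint given the intermediate state. Summing over the $N$ modes, the $|\mathcal R_{\mathrm{ad}}|$ evaluation times and the factor $|\mathcal X_{\mathrm{ad}}|$ then gives $|\mathcal X_{\mathrm{ad}}||\mathcal R_{\mathrm{ad}}|NR$ directly.

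\textbf{Step 1 (MMSE property).} Fix $t\in(0,1)$ and $k$. By the assumption $\mathbb E_q[\|\mathbf X_1\|_F^2]<\infty$ and orthonormality of $\boldsymbol\Psi$, we have $\hat{\mathbf x}_{1,k}\in L^2$. The conditional expectation $\hat{\boldsymbol\mu}^q_{t,k}(\mathbf X_t)=\mathbb E[\hat{\mathbf x}_{1,k}\mid \mathbf X_t]$ is the $L^2$-projection onto $\sigma(\mathbf X_t)$-measurable functions. Hence, for any measurable $g$ with $g(\mathbf X_t)\in L^2$,
\begin{equation*}
\mathbb E_{p_0,q}\left[\|\hat{\mathbf x}_{1,k}-\hat{\boldsymbol\mu}^q_{t,k}(\mathbf X_t)\|_2^2\right]\le \mathbb E_{p_0,q}\left[\|\hat{\mathbf x}_{1,k}-g(\mathbf X_t)\|_2^2\right].
\end{equation*}

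\textbf{Step 2 (explicit competitor).} Following the rescaled observation preceding \eqref{eq:condition_endpoint_Gaussian}, choose $g(\mathbf X_t)=\beta_k(t)^{-1}\hat{\boldsymbol\gamma}^\star_k(t)$, the $k$-th row of $\boldsymbol\Psi^\top\mathbf X_t$ divided by $\beta_k(t)$. This is a measurable (linear) function of $\mathbf X_t$, and $\beta_k(t)>0$ for $t\in(0,1)$. By Theorem~\ref{prop:graph_path},
\begin{equation*}
\hat{\mathbf x}_{1,k}-g(\mathbf X_t)=-\frac{\alpha_k(t)}{\beta_k(t)}\hat{\mathbf x}_{0,k}.
\end{equation*}
Since $\boldsymbol\Psi$ is orthogonal and the entries of $\mathbf X_0$ are i.i.d.\ standard Gaussian, the entries of $\hat{\mathbf X}_0=\boldsymbol\Psi^\top\mathbf X_0$ are also i.i.d.\ standard Gaussian. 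Therefore $\mathbb E\|\hat{\mathbf x}_{0,k}\|_2^2=R$, and the competitor's error equals $\frac{\alpha_k^2(t)}{\beta_k^2(t)}R$.

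\textbf{Step 3 (cancellation).} The weight in the statement is exactly $\beta_k^2(t)/\alpha_k^2(t)=\sinh^2(\omega_k t)/\sinh^2(\omega_k(1-t))$, so each weighted term is at most $R$. For $\omega_k=0$, use the continuous limits $\alpha_k=1-t$ and $\beta_k=t$: the weight is $t^2/(1-t)^2$ and the competitor's error is $(1-t)^2R/t^2$, so the bound still holds. Summing over $k$ and $t$ and multiplying by $|\mathcal X_{\mathrm{ad}}|$ concludes the proof.

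The only delicate point is Step 1. The posterior mean conditions on the full node-domain state $\mathbf X_t$ rather than only on mode $k$. This works in our favour, since conditioning on more information only decreases the MMSE, but it requires the square-integrability from the assumption so that the projection argument is valid. I would also check the edge case of disconnected graphs, where several modes have $\omega_k=0$; the argument there is identical.
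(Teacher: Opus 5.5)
Your proposal is correct and follows essentially the same argument as the paper: both use the MMSE optimality of the posterior mean $\hat{\boldsymbol\mu}^q_{t,k}$, take the rescaled observation $\hat{\boldsymbol\gamma}^\star_k(t)/\beta_k(t)$ as the competing estimator, observe that its error $R\,\alpha_k^2(t)/\beta_k^2(t)$ exactly cancels the weight $\beta_k^2(t)/\alpha_k^2(t)$, and treat the case $\omega_k=0$ through continuous limits. Your explicit check that the competitor is square-integrable and measurable with respect to the full state $\mathbf X_t$ is a small tightening of the same argument.
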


The proof of Theorem \ref{prop:anomaly_score} and Corollary \ref{prop:uncertainty_bound} are provided in Appendix \ref{proof:anomaly_score} and Appendix \ref{app:uncertainty_bound}, respectively.
The first term in \eqref{eq:anomaly_score_decomposition} captures graph-frequency-weighted irreducible endpoint uncertainty, which is bounded above as shown in Corollary \ref{prop:uncertainty_bound}.
Whereas the second term in \eqref{eq:anomaly_score_decomposition} measures the graph-frequency-weighted Fisher discrepancy between the test and normal path marginals, which provides essential signals for anomaly detection.
The decomposition therefore identifies a distribution-sensitive component of the proposed anomaly score that explicitly measures deviations between normal and test distributions.

\section{Experiments}

We evaluate GRASP on four widely used MTS anomaly detection datasets covering different application domains: the spacecraft telemetry dataset SMAP \citep{hundman2018detecting}, the IT infrastructure dataset SMD \citep{su2019robust}, the cybersecurity dataset CICIDS \citep{sharafaldin2018toward}, and the astrophysics dataset SWAN \citep{angryk2020swan}.
For each dataset, we construct the sensor graph using normal training data.
Specifically, we first compute a pairwise distance matrix $\boldsymbol{\Upsilon}\in\mathbb{R}^{N\times N}$ from the node features.
The distances are converted into edge weights between 0 and 1 using the Gaussian kernel $\exp(-(\frac{\upsilon_{ij}}{\iota})^2)$, where the decay rate $\iota$ is set to the standard deviation of $\boldsymbol{\Upsilon}$.
The resulting weight matrix is subsequently converted into a binary adjacency matrix with thresholding.

For each MTS, we use the first 80\% of the normal training data for anomaly detector optimization and the remaining 20\% for validation.
We evaluate MTS anomaly detection performance with three metrics: the area under the receiver operating characteristic curve (ROC), the area under the precision-recall curve (PRC), and the Best-F1 score.
ROC and PRC measure threshold-independent ranking performance, with PRC being particularly informative for highly imbalanced cases.
The Best-F1 score measures point-wise detection performance with the threshold that maximizes the F1 score using test labels.
It therefore represents retrospective detection potential rather than performance at a deployable threshold. 
For each dataset and random seed, we first average the results across its constituent series and then report the mean and standard deviation over ten random seeds.

\begin{table*}[t]
\centering
\caption{
Anomaly detection performance comparison.
}
\vspace{2pt}
\label{tab:main_results}
\resizebox{\textwidth}{!}{
\begin{tabular}{l|ccc|ccc|ccc|ccc}
\toprule
\multirow{2}{*}{Model}
& \multicolumn{3}{c|}{SMAP}
& \multicolumn{3}{c|}{SMD}
& \multicolumn{3}{c|}{CICIDS}
& \multicolumn{3}{c}{SWAN} \\
& PRC & ROC & Best-F1
& PRC & ROC & Best-F1
& PRC & ROC & Best-F1
& PRC & ROC & Best-F1 \\
\midrule

HBOS
& 0.1897 & 0.5650 & 0.2664 & 0.2678 & 0.7378 & 0.3556 & 0.2603 & 0.5586 & 0.3852 & 0.2792 & 0.5000 & 0.4365 \\

COPOD
& 0.1927 & 0.5918 & 0.2841 & 0.2139 & 0.7193 & 0.2951 & 0.2214 & 0.5566 & 0.3733 & 0.2792 & 0.5000 & 0.4365 \\

Autoencoder
& 0.2277 & 0.5859 & 0.2991 & 0.3613 & 0.7297 & 0.4214 & 0.3147 & \underline{0.7276} & 0.3938 & 0.3294 & 0.4944 & 0.4371 \\

USAD
& 0.2263 & 0.5850 & 0.3850 & 0.4105 & 0.8726 & 0.4810 & 0.2094 & 0.4619 & 0.3525 & 0.4405 & 0.5585 & 0.4365 \\

CNN
& 0.2136 & \underline{0.6577} & 0.3471 & 0.3730 & 0.7921 & 0.4313 & 0.2371 & 0.5543 & 0.3991 & 0.3760 & 0.4966 & 0.4365 \\

OmniAnomaly
& 0.2420 & 0.5950 & \underline{0.4065} & 0.4284 & \underline{0.8822} & \underline{0.5061} & 0.2138 & 0.4501 & 0.3548 & 0.4847 & 0.6130 & 0.4365 \\

TranAD
& 0.2122 & 0.6104 & 0.3264 & 0.3304 & 0.7691 & 0.4181 & 0.2043 & 0.4704 & 0.3416 & 0.3360 & 0.4911 & 0.4365 \\

A-Transformer
& 0.1340 & 0.5065 & 0.2060 & 0.0720 & 0.5013 & 0.1260 & 0.2396 & 0.4972 & 0.3060 & 0.2817 & 0.5021 & 0.4365 \\

TimesNet
& 0.1792 & 0.5336 & 0.3161 & 0.2839 & 0.8087 & 0.3713 & 0.1905 & 0.4657 & 0.3121 & 0.3068 & 0.4505 & 0.4365 \\

FITS
& 0.1455 & 0.5147 & 0.2798 & 0.2863 & 0.8299 & 0.4025 & 0.1849 & 0.4106 & 0.3126 & 0.3189 & 0.4607 & 0.4365 \\

GDN
& 0.2197 & 0.6099 & 0.3394 & 0.4006 & 0.8014 & 0.4640 & 0.2926 & 0.7017 & 0.3805 & 0.6423 & \underline{0.8356} & \underline{0.6432} \\

GCAD
& 0.2144 & 0.6348 & 0.3471 & 0.3569 & 0.8416 & 0.4535 & \underline{0.3636} & 0.7191 & 0.3844 & \underline{0.6445} & 0.8140 & 0.6231 \\

CATCH
& \underline{0.2450} & 0.6078 & 0.3664 & \underline{0.4763} & \textbf{0.8968} & 0.5057 & 0.2866 & 0.6162 & \underline{0.4023} & 0.4638 & 0.6218 & 0.4533 \\

\midrule
\rowcolor{gray!12}
\textbf{GRASP}
& \textbf{0.3149} & \textbf{0.7015} & \textbf{0.4178} & \textbf{0.4845} & 0.8567 & \textbf{0.5087} & \textbf{0.3996} & \textbf{0.8108} & \textbf{0.4615} & \textbf{0.7556} & \textbf{0.8758} & \textbf{0.6886} \\

\bottomrule
\end{tabular}
}
\end{table*}
We compare the proposed GRASP model with thirteen classical and deep-learning-based anomaly detection methods, including HBOS \citep{goldstein2012histogram}, COPOD \citep{li2020copod}, Autoencoder \citep{sakurada2014anomaly}, USAD \citep{audibert2020usad}, CNN \citep{munir2018deepant}, OmniAnomaly \citep{su2019robust}, TranAD \citep{TranAD},   A-Transformer \citep{xu2022anomaly}, TimesNet \citep{wu2023timesnet}, FITS \citep{xu2024fits}, GDN \citep{deng2021graph}, GCAD \citep{liu2025gcad}, and CATCH \citep{wu2025catch}.
For these baselines, we use the implementations provided in mTSBench \cite{zhou2026mtsbench} whenever available. For the rest of models, we use their official implementations.

In the following, we first evaluate the anomaly detection performance of the GRASP model and the baselines in Section \ref{sec:anomaly_detection_performance}.
Then we perform ablation studies to evaluate the individual contributions of the graph-spectral path, score weighting schedule, and data-dependent graph construction in Section \ref{sec:ablation}.
In Section \ref{sec:signal_attribution_analysis}, we further analyze how different flow times and graph frequencies contribute to the aggregated anomaly score.
Additional implementation details and experimental results are provided in Appendix \ref{app:experiments}.
The effects of different velocity-field architectures are studied in Appendix \ref{app:ablation_velocity_model}.
Appendix \ref{app:sensitivity_number_evaluations} analyzes how the model performs with different numbers of evaluated flow times and source samples, while Appendix \ref{app:sensitivity_tau} analyzes how the model performs with different graph Dirichlet energy weight $\tau$.
The inference efficiency of the model is discussed in Appendix \ref{app:inference_time}.

\subsection{Anomaly Detection Performance}\label{sec:anomaly_detection_performance}
Table \ref{tab:main_results} compares GRASP with the baselines on four datasets.
The best results are highlighted in bold, and the second-best results are underlined.
GRASP achieves the highest PRC and Best-F1 on all four datasets, as well as the highest ROC on three datasets.
Its PRC gains are particularly pronounced on SMAP and SWAN, indicating improved anomaly ranking across different application domains.
On SMD, however, CATCH achieves a higher ROC, although GRASP retains the best PRC and Best-F1.
Overall, these results show strong and consistent detection performance of GRASP.

\subsection{Ablation Studies}\label{sec:ablation}
\begin{wraptable}{r}{0.5\linewidth}
\centering
\vspace{-15pt}
\caption{Ablation results on SMAP and SMD.}
\vspace{2pt}
\label{tab:ablation}
\resizebox{\linewidth}{!}{
\begin{tabular}{l|ccc|ccc}
\toprule
\multirow{2}{*}{Model}
& \multicolumn{3}{c|}{SMAP}
& \multicolumn{3}{c}{SMD}\\
& PRC & ROC & Best-F1
& PRC & ROC & Best-F1\\
\midrule
GRASP-lin
& 0.2910 & 0.6879 & 0.3911
& 0.4762 & 0.8366 & 0.4968\\
GRASP-ER
& 0.2907 & 0.6909 & 0.3912
& 0.4797 & 0.8379 & 0.4993\\
GRASP-mean
& \underline{0.2915} & \underline{0.6941} & \underline{0.3920}
& \underline{0.4801} & \underline{0.8385} & \underline{0.5005}\\
\rowcolor{gray!12}
GRASP
& \textbf{0.3149} & \textbf{0.7015} & \textbf{0.4178}
& \textbf{0.4845} & \textbf{0.8567} & \textbf{0.5087}\\

\bottomrule
\end{tabular}
}
\vspace{-10pt}
\end{wraptable}
We conduct ablation studies using three variants of GRASP: 1) GRASP-mean retains the graph-spectral path with a uniform weighting schedule across flow times and graph frequencies; 2) GRASP-ER replaces the data-dependent graph in GRASP-mean with an Erd\H{o}s--R\'enyi graph that has the same sparsity level; 3) GRASP-lin replaces the graph-spectral path in GRASP-mean with the linear path by setting $\tau=0$.
The results on SMAP and SMD are presented in Table \ref{tab:ablation}, while the results on CICIDS and SWAN are deferred to Appendix \ref{app:ablation}.
GRASP achieves the best results on all reported metrics.
Its improvement over GRASP-mean supports the effectiveness of the proposed weighting schedule.
GRASP-mean consistently outperforms GRASP-lin, indicating that the graph-spectral path provides more informative anomaly signals than the linear path.
GRASP-mean also consistently outperforms GRASP-ER, suggesting that the data-dependent graph is more informative than a random graph.

\subsection{Flow-Time and Graph-Frequency Analysis}\label{sec:signal_attribution_analysis}
\begin{wrapfigure}{r}{0.6\columnwidth}
    \centering
    \vspace{-8pt}
    \includegraphics[
        width=\linewidth
    ]{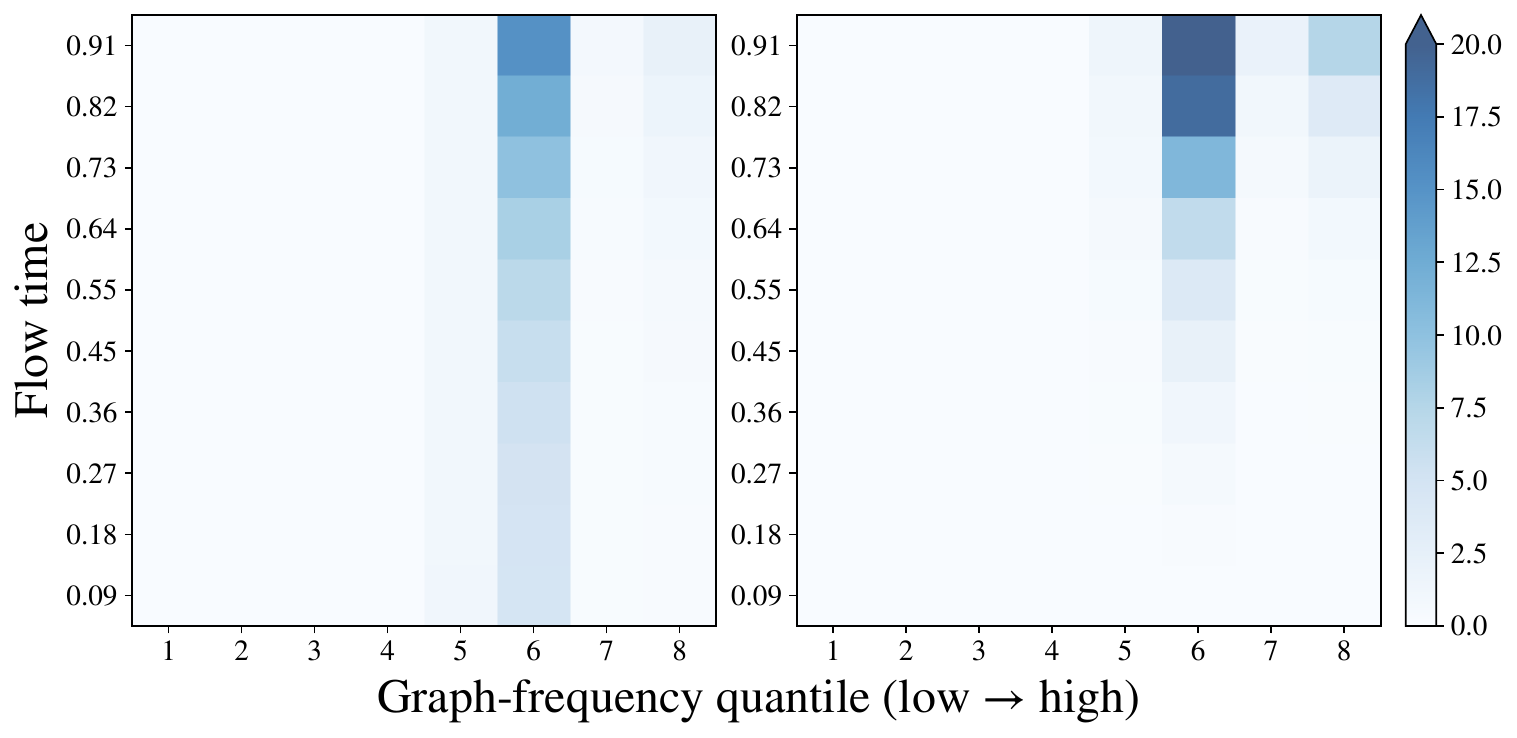}\vspace{-8pt}
    \caption{Unweighted (left) and weighted (right) anomaly signal attribution across time and frequency on SMAP. 
    }\label{fig:attribution_smap}
    \vspace{-8pt}
\end{wrapfigure}
Figure \ref{fig:attribution_smap} presents the unweighted and weighted anomaly signal attribution across flow times and graph-frequency quantiles on SMAP.
Specifically, we equally divide the graph-frequency spectrum into eight quantiles and aggregate the squared velocity residuals within each flow-time and graph-frequency bin.
Each heatmap is normalized separately to sum to $100\%$, so each cell reports its percentage of
the total anomaly signal attribution.
From the results, we observe that the anomaly evidence is concentrated toward the end of the flow trajectory and concentrated in a narrow graph-frequency band.
Applying the weighting schedule further emphasizes the anomaly signals in later flow times and larger graph frequencies.
This aligns with the fact that later states carry more information about the data endpoint than earlier states, which are more influenced by the Gaussian source. 
The dominant frequency band location differs across datasets (results on other datasets are deferred to Appendix \ref{app:attribution}), suggesting that anomaly-relevant graph frequencies depend on the specific systems.


\section{Conclusion}
In this work, we introduced a graph-informed flow matching approach for MTS anomaly detection named GRASP.
It relies on a graph-spectral path which is constructed by minimizing a fixed-endpoint action that integrates kinetic energy with graph Dirichlet energy.
GRASP detects anomalies by aggregating weighted velocity discrepancies across flow times, source samples, and graph frequencies.
We proved that the model is invariant to different Laplacian eigenbasis and the expected oracle anomaly score can be decomposed into bounded endpoint uncertainty and graph-frequency-weighted Fisher discrepancy.
Experiments on four datasets validated the effectiveness of GRASP.

Despite its promising results, GRASP has some limitations. 
First, it relies on a fixed graph, making it unsuitable when we have dynamically evolving graphs.
Moreover, computing the graph Fourier basis can become costly as the number of variables increases.
Future work could extend GRASP to be compatible with dynamic graphs and develop scalable approximations to graph spectral operations.
\newpage

\bibliography{iclr2027_conference}
\bibliographystyle{iclr2027_conference}

\newpage
\appendix

\doparttoc
\faketableofcontents

\addcontentsline{toc}{section}{Appendix}

\begingroup
\renewcommand{\partname}{}
\renewcommand{\thepart}{}
\part{{\Large Appendix}}
\endgroup

\parttoc
\newpage

\section{Related Work}\label{app:related_work}
\textbf{Graph-based MTS anomaly detection.}
Unsupervised MTS anomaly detection methods learn patterns of normal system operation and identify observations that deviate from them.
A common approach is to compute anomaly scores based on forecasting or reconstruction errors. For example, MTAD-GAT \citep{zhao2020multivariate} combines forecasting and reconstruction objectives during training and evaluates discrepancies between the observed signal and model outputs at inference. 
Graph-based approaches further incorporate dependencies among variables into these objectives. 
GDN \citep{deng2021graph} learns sensor relationships among variables to support forecasting-based detection, whereas DyEdgeGAT \citep{zhao2024dyedgegat} constructs input-dependent graphs for reconstruction-based detection. 
Other approaches characterize anomalies through association discrepancies \citep{xu2022anomaly}, differences in learned representations \citep{yang2023dcdetector}, deviations in the frequency domain \citep{wu2025catch}, or shifts in graph-spectral energy \cite{liu2026modeling}. More recently, diffusion-based methods have been introduced to reconstruct normal graph signals \citep{li2025diffgad}. However, these forecasting-, reconstruction-, and diffusion-based methods primarily evaluate discrepancies at the predicted or reconstructed endpoint. Consequently, subtle or partially predictable anomalies may remain difficult to detect when they produce only weak endpoint discrepancies.

\textbf{Flow matching and conditional path design.}
Flow matching learns a neural velocity field by regressing against conditional target velocities along prescribed probability paths that connect a simple source distribution to the data distribution. 
The choice of probability path is essential because it determines both the intermediate states and the target velocities used for training and evaluation \citep{du2026lagrangian}. 
Existing constructions, including rectified paths \citep{liu2023flow} and optimal-transport-based paths \citep{tong2024improving}, typically use straight-line interpolation between coupled endpoints.
From a variational perspective, the standard linear conditional path minimizes the action associated with a kinetic-energy Lagrangian under fixed-endpoint constraints \citep{benamou2000computational,villani2009optimal}, and it has become a standard choice in flow matching \citep{lipman2023flow}.
However, this construction depends only on the endpoints and does not explicitly incorporate the geometry or relational structure of the data. The resulting intermediate states may therefore be poorly aligned with data supported on non-Euclidean domains \citep{fang2026escaping}.

\textbf{Graph-aware generative modeling.} 
Several recent methods investigate incorporating graph structural information into diffusion or other generative frameworks.
Graph-aware diffusion models introduce structural information through graph-based denoising architectures \citep{uslu2026graph} or graph-aware forward noising schedules \citep{rozada2026graph}.
\cite{yang2025topological} extends Schr\"odinger bridge matching to topological domains such as graphs and simplicial complexes, while \cite{wyrwal2026topological} incorporates topological information into the reference process through a Laplacian-derived drift.
These methods introduce graph structure into the denoising model or a predefined stochastic reference process. In contrast, GRASP derives the conditional probability path directly as the solution to a fixed-endpoint variational problem that combines kinetic energy with graph Dirichlet energy. This formulation yields a closed-form, graph-frequency-dependent path specifically designed for velocity-based MTS anomaly detection.

\section{Notation Summary}
\label{app:notation}

Table~\ref{tab:notation} summarizes the notation used throughout the
paper.

\begin{table}[t]
\centering
\caption{Summary of notation used throughout the paper.}\vspace{2pt}
\label{tab:notation}
\small
\renewcommand{\arraystretch}{1.08}
\begin{tabularx}{\textwidth}{@{}lX@{}}
\toprule
\textbf{Symbol} & \textbf{Description} \\
\midrule
$N,\ R$ & Number of variables and length of a time-series window. \\
$\mathcal G=(\mathcal N,\mathcal E)$ & Sensor graph with node set $\mathcal N$ and edge set $\mathcal E$. \\
$\mathbf X\in\mathbb R^{N\times R}$ & Multivariate time-series window. \\
$\mathbf L$ & Symmetric normalized graph Laplacian. \\
$\boldsymbol{\Psi},\boldsymbol\Lambda$ & Laplacian eigenvector and eigenvalue matrices satisfying $\mathbf L=\boldsymbol\Psi\boldsymbol\Lambda\boldsymbol\Psi^\top$. \\
$\lambda_k,\boldsymbol\psi_k$ & Laplacian eigenvalue and eigenvector associated with graph-frequency mode $k$. \\
$\hat{\mathbf X}=\boldsymbol\Psi^\top\mathbf X$ & Graph Fourier transform of $\mathbf X$. \\
$\hat{\mathbf x}_{t,k}$ & The $k$-th row of $\hat{\mathbf X}$, representing graph-frequency mode $k$ across the time window. \\
$t\in[0,1]$ & Flow time. \\
$\mathbf X_0,\mathbf X_1,\mathbf X_t$ & Source endpoint, data endpoint, and intermediate state. \\
$p_0$&Standard Gaussian source distribution.\\
$p,\ q$ & Normal and test endpoint distributions. \\
$p_t,\ q_t$ & Intermediate-state distributions induced by endpoint distributions $p$ and $q$. \\
$\phi_t$ & Conditional interpolation map between $\mathbf X_0$ and $\mathbf X_1$. \\
$\mathbf U_t$ & Conditional target velocity induced by $\phi_t$. \\
$\mathbf V_t(\mathbf X;\boldsymbol\theta)$ & Learned time-dependent velocity field. \\
$\boldsymbol\Gamma^\star(t),\dot{\boldsymbol\Gamma}^\star(t)$ & Optimal graph-spectral path and its velocity. \\
$\hat{\boldsymbol\gamma}_k^\star(t)$ & The $k$-th graph-frequency component of $\Gamma^\star(t)$. \\
$\tau$ & Weight of the graph Dirichlet energy. \\
$\omega_k=\sqrt{\tau\lambda_k}$ & Graph-spectral parameter associated with graph-frequency mode $k$. \\
$\alpha_k(t),\beta_k(t)$ & Source and target interpolation coefficients for graph-frequency mode $k$. \\
$\mathbf A(t),\mathbf B(t)$ & Diagonal matrices collecting $\alpha_k(t)$ and $\beta_k(t)$. \\
$\mathbf V_t^p(\mathbf X)$ & Oracle marginal velocity field induced by the normal endpoint distribution. \\
$\hat{\mathbf v}_{t,k}^p$ & The $k$-th row of $\mathbf V_t^p(\mathbf X)$. \\
$\mathbf M_t^p(\mathbf X),\mathbf M_t^q(\mathbf X)$ & Endpoint posterior means under $p$ and $q$. \\
$\hat{\boldsymbol\mu}_{t,k}^{p},\hat{\boldsymbol\mu}_{t,k}^{q}$ & Endpoint posterior mean of graph-frequency mode $k$ under distribution $p$ and $q$. \\
$\rho_k(t)$ & Scaling factor relating the conditional velocity residual to the endpoint posterior residual. \\
$\eta_k(t)$ & Anomaly-score weight for graph-frequency mode $k$ at flow-time $t$. \\
$\mathcal X_{\mathrm{ad}},\mathcal R_{\mathrm{ad}}$ & Sets of source samples and evaluation flow times used for anomaly scoring.\\
$\xi(\mathcal X_{\mathrm{ad}},\mathcal R_{\mathrm{ad}},\mathbf X_1)$ & Anomaly score of test window $\mathbf X_1$. \\
$\hat{\mathbf S}_{t}^p,\hat{\mathbf S}_{t}^q$ & Spectral score matrices of the normal and test path marginals. \\
$\mathbf d_{t,k}$ & Endpoint posterior residual for graph-frequency mode $k$. \\
$\boldsymbol\delta_{t,k}$ & Difference between the test and normal spectral scores of graph-frequency mode $k$. \\
$\tilde{\boldsymbol\Psi}=\boldsymbol\Psi\mathbf Q$ & Alternative Laplacian eigenbasis within repeated-eigenvalue eigenspaces. \\
$\mathbf Q$ & Block-diagonal orthogonal transformation matrix acting within repeated-eigenvalue eigenspaces. \\

\bottomrule
\end{tabularx}
\end{table}

\section{Velocity-Field Architecture}\label{app:velocity_model}
Given an intermediate state
$\mathbf{X}_t\in\mathbb{R}^{N\times R}$ and its flow time $t$, we first
encode $t$ using a sinusoidal flow-time embedding function $\mathbf{e}(t)$.
The intermediate state and time embedding are then concatenated and mapped to an initial hidden representation through an input projection:
\begin{equation}
    \mathbf{H}^{(0)}
    =
    \phi_{\mathrm{in}}
    \left(
        \mathbf{X}_t,\, e(t)
    \right),
\end{equation}
where $\phi_{\mathrm{in}}$ denote the input projection. 

Following the TSMixer architecture \citep{chen2023tsmixer}, we process the hidden representation using $L_\mathrm{TSMixer}$ residual mixing blocks. 
Each block alternates between temporal mixing and cross-variable mixing as follows:
\begin{align}
    \tilde{\mathbf{H}}^{(\ell)}
    &=
    \mathbf{H}^{(\ell-1)}
    +
    \mathcal{M}_{\mathrm{time}}^{(\ell)}
    \left(
        \operatorname{Norm}
        (\mathbf{H}^{(\ell-1)})
    \right),
    \label{eq:tsmixer_time}
    \\
    \mathbf{H}^{(\ell)}
    &=
    \tilde{\mathbf{H}}^{(\ell)}
    +
    \mathcal{M}_{\mathrm{var}}^{(\ell)}
    \left(
        \operatorname{Norm}
        (\tilde{\mathbf{H}}^{(\ell)})
    \right),
    \label{eq:tsmixer_var}
\end{align}
for $\ell=1,\ldots,L_\mathrm{TSMixer}$.
Here, $\operatorname{Norm}$ represents layer normalization \citep{ba2016layer}, $\mathcal{M}_{\mathrm{time}}^{(\ell)}$ is an MLP applied along the temporal
dimension and shared across variables, while $\mathcal{M}_{\mathrm{var}}^{(\ell)}$ is an MLP applied along the variable dimension and shared across time steps. Each mixing MLP consists of two linear
layers with a nonlinear activation function ReLU and dropout.

Finally, an output projection maps the hidden representation back to the original data space:
\begin{equation}
    \mathbf{V}_t(\mathbf{X}_t;\boldsymbol{\theta})
    =
    \phi_{\mathrm{out}}
    \left(
        \mathbf{H}^{(L_\mathrm{TSMixer})}
    \right)
    \in\mathbb{R}^{N\times R}.
    \label{eq:velocity_network}
\end{equation}
The alternating mixing operations enable the velocity predictor to capture temporal dependencies within individual variables and interactions across variables while retaining a lightweight, fully MLP-based architecture. Importantly, in GRASP, graph structure is incorporated through the probability path and conditional target velocity rather than through the velocity-field architecture itself.

\section{Proofs of Theoretical Results}\label{app:proofs}
\subsection{Proof of Theorem \ref{prop:graph_path}\label{proof:graph_path}}
\begin{proof}
The Euler-Lagrange equation corresponds to the variational problem in \eqref{eq:separated_variational_problem} is
\begin{equation}
\omega^2_k\hat{\boldsymbol{\gamma}}_{k}(t)-\ddot{\hat{\boldsymbol{\gamma}}}_{k}(t)=\mathbf{0},
\end{equation}
whose general solution is
\begin{equation}
\hat{\boldsymbol{\gamma}}_k(t)=\mathbf{c}_{1,k}e^{\omega_k t}+\mathbf{c}_{2,k}e^{-\omega_k t}
\end{equation}
with some $\mathbf{c}_{1,k}$ and $\mathbf{c}_{2,k}$.
Since the hyperbolic functions satisfy
\begin{equation}
\sinh(\omega_kt)=\frac{e^{\omega_k t}-e^{-\omega_k t}}{2},\quad \cosh(\omega_kt)=\frac{e^{\omega_k t}+e^{-\omega_k t}}{2},
\end{equation}
we can further transform it into
\begin{equation}
\hat{\boldsymbol{\gamma}}_k(t)=\left(\mathbf{c}_{1,k}-\mathbf{c}_{2,k}\right)\sinh(\omega_kt)+\left(\mathbf{c}_{1,k}+\mathbf{c}_{2,k}\right)\cosh(\omega_kt).
\end{equation}

Considering the boundary condition, we have
\begin{equation}
\hat{\boldsymbol{\gamma}}_k^\star(t)=\frac{\hat{\mathbf{x}}_{1,k}-\cosh(\omega_k)\hat{\mathbf{x}}_{0,k}}{\sinh(\omega_k)}\sinh(\omega_kt)+\cosh(\omega_kt)\hat{\mathbf{x}}_{0,k},
\end{equation}
which can be rewritten as
\begin{equation}
\begin{aligned}
\hat{\boldsymbol{\gamma}}_k^\star(t)&=\left(\cosh(\omega_kt)-\frac{\cosh(\omega_k)\sinh(\omega_kt)}{\sinh(\omega_k)}\right)\hat{\mathbf{x}}_{0,k}+\frac{\sinh(\omega_kt)}{\sinh(\omega_k)}\hat{\mathbf{x}}_{1,k}\\
&=\frac{\sinh(\omega_k(1-t))}{\sinh(\omega_k)}\hat{\mathbf{x}}_{0,k}+\frac{\sinh(\omega_kt)}{\sinh(\omega_k)}\hat{\mathbf{x}}_{1,k}.  
\end{aligned}
\label{eq:graph_spectral_path}
\end{equation}
When $\omega_k=0$, the Euler-Lagrange equation reduces to
\begin{equation}
\ddot{\hat{\boldsymbol{\gamma}}}_{k}(t)=\mathbf{0}.
\end{equation}
Under the same boundary conditions, its solution is
\begin{equation}
\hat{\boldsymbol{\gamma}}_k^\star(t)=(1-t)\hat{\mathbf{x}}_{0,k}+t\hat{\mathbf{x}}_{1,k},
\end{equation}
which coincides with the continuous limit of \eqref{eq:graph_spectral_path} as $\omega_k\rightarrow0$.

The kinetic-energy term in \eqref{eq:separated_variational_problem} is strictly convex over the set of paths satisfying the fixed-endpoint constraints. Indeed, two admissible paths with identical derivatives can differ only by a constant, which must be zero because they share the same endpoints. Moreover, the graph-frequency regularization term is convex since \(\omega_k^2=\tau\lambda_k\geq 0\). Therefore, the complete objective in \eqref{eq:separated_variational_problem} is strictly convex over its feasible set. Since the path in \eqref{eq:graph_spectral_path} satisfies both the Euler–Lagrange equation and the boundary conditions, it is the unique global minimizer of the variational problem defined in \eqref{eq:separated_variational_problem}, through which the proof is completed.
\end{proof}

\subsection{Proof of Lemma \ref{prop:target_velocity_scaling}\label{proof:target_velocity_scaling}}

\begin{proof}
According to Theorem \ref{prop:graph_path}, the $k$-th graph-frequency component of the conditional path is
\begin{equation}
\hat{\boldsymbol{\gamma}}_k^\star(t)=\alpha_k(t)\hat{\mathbf{x}}_{0,k}+\beta_k(t)\hat{\mathbf{x}}_{1,k}.
\end{equation}
For $t\in(0,1)$, we have $\alpha_k(t)>0$. Then, we can obtain
\begin{equation}
\hat{\mathbf{x}}_{0,k}=\frac{1}{\alpha_k(t)}\left(\hat{\boldsymbol{\gamma}}_k^\star(t)-\beta_k(t)\hat{\mathbf{x}}_{1,k}\right).  
\label{eq:x_0_reexpressed}
\end{equation}
Substituting \eqref{eq:x_0_reexpressed} into the conditional target velocity in \eqref{eq:spectral_velocity} gives
\begin{equation}
\dot{\hat{\boldsymbol{\gamma}}}_k^\star(t)=\frac{\dot{\alpha}_k(t)}{\alpha_k(t)}\left(\hat{\boldsymbol{\gamma}}_k^\star(t)-\beta_k(t)\hat{\mathbf{x}}_{1,k}\right)+\dot{\beta}_k(t)\hat{\mathbf{x}}_{1,k}. 
\end{equation}
which can be rewritten as
\begin{equation}
\dot{\hat{\boldsymbol{\gamma}}}_k^\star(t)=\frac{\dot{\alpha}_k(t)}{\alpha_k(t)}\hat{\boldsymbol{\gamma}}_k^\star(t)+\rho_k(t)\hat{\mathbf{x}}_{1,k}
\label{eq:target_velocity_reform}
\end{equation}
with
\begin{equation}
    \rho_k(t)=\dot{\beta}_k(t)-\frac{\dot{\alpha}_k(t)}{\alpha_k(t)}\beta_k(t).
\end{equation}
For $\omega_k>$, substituting the expressions for $\alpha_k(t)$ and $\beta_k(t)$ yields
\begin{equation}
\begin{aligned}
    \rho_k(t)&=\frac{\omega_k\cosh(\omega_kt)}{\sinh(\omega_k)}+\frac{\omega_k\cosh(\omega_k(1-t))}{\sinh(\omega_k)}\frac{\sinh(\omega_k)}{\sinh(\omega_k(1-t))}\frac{\sinh(\omega_kt)}{\sinh(\omega_k)}\\
    &=\frac{\omega_k}{\sinh(\omega_k)}\frac{\cosh(\omega_kt)\sinh(\omega_k(1-t))+\cosh(\omega_k(1-t))\sinh(\omega_kt)}{\sinh(\omega_k(1-t))}\\
    &=\frac{\omega_k}{\sinh(\omega_k)}\frac{\sinh(\omega_kt+\omega_k(1-t))}{\sinh(\omega_k(1-t))}\\
    &=\frac{\omega_k}{\sinh(\omega_k(1-t))}.
\end{aligned}
\end{equation}
Based on the definition of the oracle marginal velocity, we have 
\begin{equation}
\begin{aligned}
\hat{\mathbf{v}}^p_{t,k}(\mathbf{X})&=\mathbb E[\dot{\hat{\boldsymbol{\gamma}}}_k^\star(t)\mid \mathbf{X}_t=\mathbf{X}]\\
&=\mathbb E[\frac{\dot{\alpha}_k(t)}{\alpha_k(t)}\hat{\boldsymbol{\gamma}}_k^\star(t)+\rho_k(t)\hat{\mathbf{x}}_{1,k}\mid \mathbf{X}_t=\mathbf{X}]\\
&=\mathbb E[\frac{\dot{\alpha}_k(t)}{\alpha_k(t)}\hat{\boldsymbol{\gamma}}_k^\star(t)\mid \mathbf{X}_t=\mathbf{X}]+\mathbb E[\rho_k(t)\hat{\mathbf{x}}_{1,k}\mid \mathbf{X}_t=\mathbf{X}]\\
&=\frac{\dot{\alpha}_k(t)}{\alpha_k(t)}\hat{\boldsymbol{\gamma}}_k^\star(t)+\rho_k(t)\hat{\boldsymbol{\mu}}^p_{t,k}.
\end{aligned}
\label{eq:oracle_marginal_velocity}
\end{equation}
Subtracting \eqref{eq:oracle_marginal_velocity} from \eqref{eq:target_velocity_reform}, we obtain
\begin{equation}
\begin{aligned}
    \dot{\hat{\boldsymbol{\gamma}}}_k^\star(t)-\hat{\mathbf{v}}^p_{t,k}(\mathbf{X})&=\frac{\dot{\alpha}_k(t)}{\alpha_k(t)}\hat{\boldsymbol{\gamma}}_k^\star(t)+\rho_k(t)\hat{\mathbf{x}}_{1,k}-\left(\frac{\dot{\alpha}_k(t)}{\alpha_k(t)}\hat{\boldsymbol{\gamma}}_k^\star(t)+\rho_k(t)\hat{\boldsymbol{\mu}}^p_{t,k}(\mathbf{X})\right)\\
    &=\rho_k(t)\left(\hat{\mathbf{x}}_{1,k}-\hat{\boldsymbol{\mu}}^p_{t,k}(\mathbf{X})\right),   
\end{aligned}
\end{equation}
which completes the proof.
\end{proof}

\subsection{Proof of Proposition \ref{prop:eigenbasis_invariance}\label{proof:eigenbasis_invariance}}
\begin{proof}
Let $\nu_1,\ldots,\nu_J$ denote the distinct eigenvalues of
$\mathbf{L}$, and define
\begin{equation}
 \mathcal{I}_j
=
\{k:\lambda_k=\nu_j\}   
\end{equation}
as the index set associated with eigenvalue $\nu_j$. Any alternative orthonormal eigenbasis of $\mathbf{L}$ with the same eigenvalue ordering can be written as
\begin{equation}
   \tilde{\boldsymbol{\Psi}}
=
\boldsymbol{\Psi}\mathbf{Q},
\qquad
\mathbf{Q}
=
\operatorname{blkdiag}
\left(
\mathbf{Q}_1,\ldots,\mathbf{Q}_J
\right), 
\end{equation}

where each
$\mathbf{Q}_j\in
\mathbb{R}^{|\mathcal{I}_j|\times|\mathcal{I}_j|}$
is orthogonal. Hence, we have
\begin{equation}
 \mathbf{Q}^{\top}\mathbf{Q}
=
\mathbf{Q}\mathbf{Q}^{\top}
=
\mathbf{I},
\qquad
\mathbf{Q}\mathbf{\Lambda}
=
\mathbf{\Lambda}\mathbf{Q}.   
\end{equation}

All graph-spectral interpolation coefficients depend on the graph-frequency mode $k$ only through $\lambda_k$.
Therefore, their values are identical for all modes within the same repeated-eigenvalue eigenspace.
It follows that
\begin{equation}
    \mathbf{Q}\mathbf{A}(t)
=
\mathbf{A}(t)\mathbf{Q},
\qquad
\mathbf{Q}\mathbf{B}(t)
=
\mathbf{B}(t)\mathbf{Q},
\end{equation}
and similarly,
\begin{equation}
    \mathbf{Q}\dot{\mathbf{A}}(t)
=
\dot{\mathbf{A}}(t)\mathbf{Q},
\qquad
\mathbf{Q}\dot{\mathbf{B}}(t)
=
\dot{\mathbf{B}}(t)\mathbf{Q}.
\end{equation}
Under the alternative eigenbasis, the node-domain graph-spectral path satisfies
\begin{equation}
\begin{aligned}
\dot{\mathbf{\Gamma}}^{\star}(t)
&=
\boldsymbol{\Psi}\dot{\mathbf{A}}(t)\boldsymbol{\Psi}^{\top}\mathbf{X}_0
+
\boldsymbol{\Psi}\dot{\mathbf{B}}(t)\boldsymbol{\Psi}^{\top}\mathbf{X}_1
\\
&=
\boldsymbol{\Psi}\mathbf{Q}
\dot{\mathbf{A}}(t)
\mathbf{Q}^{\top}\boldsymbol{\Psi}^{\top}\mathbf{X}_0
+
\boldsymbol{\Psi}\mathbf{Q}
\dot{\mathbf{B}}(t)
\mathbf{Q}^{\top}\boldsymbol{\Psi}^{\top}\mathbf{X}_1
\\
&=
\tilde{\boldsymbol{\Psi}}
\dot{\mathbf{A}}(t)
\tilde{\boldsymbol{\Psi}}^{\top}\mathbf{X}_0
+
\tilde{\boldsymbol{\Psi}}
\dot{\mathbf{B}}(t)
\tilde{\boldsymbol{\Psi}}^{\top}\mathbf{X}_1.
\end{aligned}    
\end{equation}
Thus, the node-domain conditional target velocity is invariant to the particular choice of eigenvectors within each degenerate eigenspace.

The learned velocity field model $\mathbf{V}_t(\mathbf{X}_t;\boldsymbol{\theta})$ is defined entirely in the node domain as a function of $\mathbf{X}_t$, which does not directly depend on $\boldsymbol{\Psi}$. Therefore, for fixed parameters \(\boldsymbol{\theta}\), its output is unaffected by the choice of Laplacian eigenbasis. Moreover, because the node-domain conditional target velocity is invariant, the training objective in \eqref{eq:loss} is also eigenbasis invariant. 

It remains to establish the invariance of the anomaly score. Define the node-domain velocity residual as
\begin{equation}
 \mathbf{R}_t
=
\mathbf{V}_t(\mathbf{X}_t;\boldsymbol{\theta})
-
\dot{\mathbf{\Gamma}}^{\star}(t)
\in\mathbb{R}^{N\times R},   
\end{equation}
and its graph Fourier representation
\begin{equation}
 \hat{\mathbf{R}}_t
=
\boldsymbol{\Psi}^{\top}\mathbf{R}_t.   
\end{equation}
The $k$-th row of $\hat{\mathbf{R}}_t$ is
\begin{equation}
\hat{\mathbf{r}}_{t,k}
=
\hat{\mathbf{v}}_{t,k}
-
\dot{\hat{\boldsymbol{\gamma}}}^{\star}_k(t).    
\end{equation}
Furthermore, define the diagonal weighting matrix
\begin{equation}
\mathbf{H}(t)
=
\operatorname{diag}
\left(
\eta_1(t),\ldots,\eta_N(t)
\right).    
\end{equation}
Then, for a fixed $t$ and $\mathbf{X}_0$, the corresponding contribution
to the anomaly score can be written as
\begin{equation}
\begin{aligned}
\sum_{k=1}^{N}
\eta_k(t)
\left\|
\hat{\mathbf{v}}_{t,k}
-
\dot{\hat{\boldsymbol{\gamma}}}^{\star}_k(t)
\right\|_2^2
&=
\sum_{k=1}^{N}
\eta_k(t)
\left\|
\hat{\mathbf{r}}_{t,k}
\right\|_2^2
\\
&=
\operatorname{Tr}
\left(
\hat{\mathbf{R}}_t^{\top}
\mathbf{H}(t)
\hat{\mathbf{R}}_t
\right).
\end{aligned}    
\label{eq:eigenbasis_weighted_residual}
\end{equation}

Under the alternative eigenbasis
$\tilde{\boldsymbol{\Psi}}=\boldsymbol{\Psi}\mathbf{Q}$, the corresponding
spectral residual is
\begin{equation}
\tilde{\mathbf{R}}_t
=
\tilde{\boldsymbol{\Psi}}^{\top}\mathbf{R}_t
=
\mathbf{Q}^{\top}\hat{\mathbf{R}}_t.    
\end{equation}
Moreover, from the definition of the anomaly weight,
\begin{equation}
 \eta_k(t)
=
\frac{\sinh^2(\omega_k t)}{\omega_k^2},
\qquad
\omega_k=\sqrt{\tau\lambda_k},   
\end{equation}
so $\eta_k(t)$ also depends on $k$ only through $\lambda_k$.
Consequently, all modes belonging to the same eigenspace have the same
weight. Hence, $\mathbf{H}(t)$ is scalar within each degenerate
eigenspace and satisfies
\begin{equation}
  \mathbf{Q}\mathbf{H}(t)
=
\mathbf{H}(t)\mathbf{Q},
\qquad
\mathbf{Q}\mathbf{H}(t)\mathbf{Q}^{\top}
=
\mathbf{H}(t).  
\end{equation}
Therefore,
\begin{align}
\operatorname{Tr}
\left(
\tilde{\mathbf{R}}_t^{\top}
\mathbf{H}(t)
\tilde{\mathbf{R}}_t
\right)
&=
\operatorname{Tr}
\left(
\hat{\mathbf{R}}_t^{\top}
\mathbf{Q}\mathbf{H}(t)\mathbf{Q}^{\top}
\hat{\mathbf{R}}_t
\right)\\
&=
\operatorname{Tr}
\left(
\hat{\mathbf{R}}_t^{\top}
\mathbf{H}(t)
\hat{\mathbf{R}}_t
\right).
\end{align}
Thus, although individual spectral coordinates within a degenerate
eigenspace depend on the choice of eigenbasis, their weighted aggregate
contribution to the anomaly score is invariant.

Finally, summing the above equality over
$t\in\mathcal{R}_{\mathrm{ad}}$ and
$\mathbf{X}_0\in\mathcal{X}_{\mathrm{ad}}$ shows that
$
\xi(\mathcal{X}_{\mathrm{ad}},
    \mathcal{R}_{\mathrm{ad}},
    \mathbf{X}_1)
$
is unchanged under any orthogonal rotation within a repeated
eigenspace. Therefore, the velocity field model, the conditional target velocity,
and the resulting anomaly score are all invariant to the choice of
Laplacian eigenbasis.
\end{proof}

\subsection{Proof of Theorem \ref{prop:anomaly_score}\label{proof:anomaly_score}}
\begin{proof}
According to Theorem \ref{prop:graph_path}, we have
\begin{equation}
\hat{\boldsymbol{\gamma}}_k^\star(t)=\alpha_k(t)\hat{\mathbf{x}}_{0,k}+\beta_k(t)\hat{\mathbf{x}}_{1,k}.
\end{equation}
Since the graph Fourier transform is orthonormal and \(\mathbf X_0\sim\mathcal N(\mathbf 0,\mathbf I)\), each source component satisfies $\hat{\mathbf x}_{0,k}\sim\mathcal N(\mathbf 0,\mathbf I).$ Thus, conditioning on the data endpoint gives
\begin{equation}
    \hat{\boldsymbol{\gamma}}_k^\star(t)\mid \hat{\mathbf{x}}_{1,k}\sim\mathcal{N}\left(\beta_k(t)\hat{\mathbf{x}}_{1,k},\alpha_k^2(t)\mathbf{I}\right).
\end{equation}
The conditional density is
\begin{equation}
    \phi_t(\hat{\mathbf{x}}_k(t)\mid \hat{\mathbf{x}}_{1,k})=\frac{1}{(2\pi\alpha^2_k(t))^{R/2}}\exp\left[-\frac{\|\hat{\mathbf{x}}_k-\beta_k(t)\hat{\mathbf{x}}_{1,k}\|_2^2}{2\alpha_k^2(t)}\right].
\end{equation}
For \(t\in(0,1)\), both \(\alpha_k(t)\) and \(\beta_k(t)\) are positive. The score over the $k$-th graph-frequency is
\begin{equation}
\begin{aligned}
    \hat{\mathbf{s}}_{t,k}^p(\hat{\mathbf{X}})&=\mathbb{E}_{p_0,p}\left[\nabla_{\hat{\mathbf{x}}_k}\log p_t(\hat{\mathbf{X}}\mid\hat{\mathbf{X}}_1)\mid\hat{\mathbf{X}}_t=\hat{\mathbf{X}}\right]\\
    &=\mathbb{E}_{p_0,p}\left[-\frac{\hat{\mathbf{x}}_k-\beta_k(t)\hat{\mathbf{x}}_{1,k}}{\alpha_k^2(t)}\mid\hat{\mathbf{X}}_t=\hat{\mathbf{X}}\right]\\
    &=-\frac{\hat{\mathbf{x}}_k}{\alpha_k^2(t)}+\frac{\beta_k(t)}{\alpha_k^2(t)}\hat{\boldsymbol{\mu}}^p_{t,k}.
\end{aligned}
\end{equation}
Therefore, we have
\begin{equation}
    \beta_k(t)\hat{\boldsymbol{\mu}}^p_{t,k}(\mathbf{X})=\hat{\mathbf{x}}_k+\alpha_k^2(t)\hat{\mathbf{s}}_{t,k}^p(\hat{\mathbf{X}}).
\end{equation}
Applying the same identity under the test endpoint distribution $q$ gives
\begin{equation}
    \beta_k(t)\hat{\boldsymbol{\mu}}^q_{t,k}(\mathbf{X})=\hat{\mathbf{x}}_k+\alpha_k^2(t)\hat{\mathbf{s}}_{t,k}^q(\hat{\mathbf{X}}).
\end{equation}
Thus, we have
\begin{equation}
    \hat{\boldsymbol{\mu}}^q_{t,k}(\mathbf{X})-\hat{\boldsymbol{\mu}}^p_{t,k}(\mathbf{X})=\frac{\alpha_k^2(t)}{\beta_k(t)}\left(\hat{\mathbf{s}}_{t,k}^q(\hat{\mathbf{X}})-\hat{\mathbf{s}}_{t,k}^p(\hat{\mathbf{X}})\right)
\end{equation}
Based on Lemma \ref{prop:target_velocity_scaling}, we know that
\begin{equation}
    \dot{\hat{\boldsymbol{\gamma}}}_k^\star(t)-\hat{\mathbf{v}}^p_{t,k}(\mathbf{X})=\rho_k(t)\left(\hat{\mathbf{x}}_{1,k}-\hat{\boldsymbol{\mu}}^p_{t,k}(\mathbf{X})\right).
\end{equation}
Thus, we have
\begin{equation}
\begin{aligned}
    \dot{\hat{\boldsymbol{\gamma}}}_k^\star(t)-\hat{\mathbf{v}}^p_{t,k}(\mathbf{X})&=\rho_k(t)\left(\hat{\mathbf{x}}_{1,k}-\hat{\boldsymbol{\mu}}^q_{t,k}(\mathbf{X})+\hat{\boldsymbol{\mu}}^q_{t,k}(\mathbf{X})-\hat{\boldsymbol{\mu}}^p_{t,k}(\mathbf{X})\right)\\
    &=\rho_k(t)\left(\hat{\mathbf{x}}_{1,k}-\hat{\boldsymbol{\mu}}^q_{t,k}(\mathbf{X})+\frac{\alpha_k^2(t)}{\beta_k(t)}\left(\hat{\mathbf{s}}_{t,k}^q(\hat{\mathbf{X}})-\hat{\mathbf{s}}_{t,k}^p(\hat{\mathbf{X}})\right)\right)
\end{aligned}
\end{equation}
Define $\mathbf{d}_{t,k}=\hat{\mathbf{x}}_{1,k}-\hat{\boldsymbol{\mu}}^q_{t,k}(\mathbf{X})$ and $\boldsymbol{\delta}_{t,k}=\hat{\mathbf{s}}_{t,k}^q(\hat{\mathbf{X}})-\hat{\mathbf{s}}_{t,k}^p(\hat{\mathbf{X}})$, then we obtain
\begin{equation}
    \dot{\hat{\boldsymbol{\gamma}}}_k^\star(t)-\hat{\mathbf{v}}^p_{t,k}(\mathbf{X})=\rho_k(t)\left(\mathbf{d}_{t,k}+\frac{\alpha_k^2(t)}{\beta_k(t)}\boldsymbol{\delta}_{t,k}\right).
\end{equation}
Thus,
\begin{equation}
\begin{aligned}
    \|\dot{\hat{\boldsymbol{\gamma}}}_k^\star(t)-\hat{\mathbf{v}}^p_{t,k}(\mathbf{X})\|_2^2&=\rho_k^2(t)\left\Vert\mathbf{d}_{t,k}+\frac{\alpha_k^2(t)}{\beta_k(t)}\boldsymbol{\delta}_{t,k}\right\Vert_2^2\\
    &=\rho_k^2(t)\left(\|\mathbf{d}_{t,k}\|_2^2+\frac{\alpha_k^4(t)}{\beta^2_k(t)}\|\boldsymbol{\delta}_{t,k}\|_2^2+2\frac{\alpha_k^2(t)}{\beta_k(t)}\mathbf{d}_{t,k}^\top\boldsymbol{\delta}_{t,k}\right).
\end{aligned}
\end{equation}
Since
\begin{equation}
    \hat{\boldsymbol{\mu}}^q_{t,k}(\mathbf{X})=\mathbb E_{p_0,q}[\hat{\boldsymbol{\gamma}}_k^\star(1)\mid \mathbf{X}_t=\mathbf{X}],
\end{equation}
we have
\begin{equation}
    \mathbb{E}_{p_0,q}\left[\mathbf{d}_{t,k}\mid\hat{\mathbf{X}}_t\right]=\mathbb{E}_{p_0,q}\left[\hat{\mathbf{x}}_{1,k}-\hat{\boldsymbol{\mu}}^q_{t,k}(\mathbf{X})\mid\hat{\mathbf{X}}_t\right]=\mathbf{0}.
\end{equation}
As $\boldsymbol{\delta}_{t,k}$ only depends on $\hat{\mathbf{X}}_t$, the cross term vanishes:
\begin{equation}
    \mathbb{E}_{p_0,q}\left[\langle\mathbf{d}_{t,k},\boldsymbol{\delta}_{t,k}\rangle\mid\hat{\mathbf{X}}_t\right]=\langle\mathbb{E}_{p_0,q}\left[\mathbf{d}_{t,k}\mid\hat{\mathbf{X}}_t\right],\boldsymbol{\delta}_{t,k}(\hat{\mathbf{X}}_t)\rangle=\mathbf{0}.
\end{equation}
Therefore
\begin{equation}
\begin{aligned}
    \mathbb{E}_{p_0,q}\left[\|\dot{\hat{\boldsymbol{\gamma}}}_k^\star(t)-\hat{\mathbf{v}}^p_{t,k}(\mathbf{X})\|_2^2\right]=\rho_k^2(t)\mathbb{E}_{p_0,q}\left[\|\mathbf{d}_{t,k}\|_2^2\right]+\rho_k^2(t)\frac{\alpha_k^4(t)}{\beta^2_k(t)}\mathbb{E}_{q_t}\left[\|\boldsymbol{\delta}_{t,k}\|_2^2\right].
\end{aligned}
\end{equation}
Since the anomaly score sums the contributions from these identically
distributed source samples in $\mathcal X_{\mathrm{ad}}$, linearity of expectation gives
\begin{equation}
\begin{aligned}
&\mathbb{E}_{\mathcal{X}_\mathrm{ad},q}\left[\xi(\mathcal{X}_\mathrm{ad},\mathcal{R}_\mathrm{ad},\mathbf{X}_1)\right] \\=& \sum_{k=1}^N\sum_{t\in\mathcal{R}_\mathrm{ad}}|\mathcal{X}_\mathrm{ad}|
 \left( \eta_k(t)\rho_k^2(t)\mathbb{E}_{p_0,q}\left[\|\mathbf{d}_{t,k}\|_2^2\right]+\eta_k(t)\rho_k^2(t)\frac{\alpha_k^4(t)}{\beta^2_k(t)}\mathbb{E}_{q_t}\left[\|\boldsymbol{\delta}_{t,k}\|_2^2\right]\right).    
\end{aligned}
\end{equation}
Observe that 
\begin{equation}
    \eta_k(t)\rho_k^2(t)=\frac{\sinh^2(\omega_kt)}{\omega^2_k}\left(\frac{\omega_k}{\sinh(\omega_k(1-t))}\right)^2=\frac{\sinh^2(\omega_kt)}{\sinh^2(\omega_k(1-t))}
\end{equation}
and
\begin{equation}
\begin{aligned}
    \eta_k(t)\rho_k^2(t)\frac{\alpha_k^4(t)}{\beta^2_k(t)}&=\frac{\sinh^2(\omega_kt)}{\sinh^2(\omega_k(1-t))}\left(\frac{\sinh(\omega_k(1-t))}{\sinh(\omega_k)}\right)^4\left(\frac{\sinh(\omega_k)}{\sinh(\omega_kt)}\right)^2\\&=\frac{\sinh^2(\omega_k(1-t))}{\sinh^2(\omega_k)},
\end{aligned}
\end{equation}
we have the following result:
\begin{equation}
\begin{aligned}
&\mathbb{E}_{\mathcal{X}_\mathrm{ad},q}\left[\xi(\mathcal{X}_\mathrm{ad},\mathcal{R}_\mathrm{ad},\mathbf{X}_1)\right] \\=&\sum_{k=1}^N\sum_{t\in\mathcal{R}_\mathrm{ad}}|\mathcal{X}_\mathrm{ad}|
 \left( \frac{\sinh^2(\omega_kt)}{\sinh^2(\omega_k(1-t))}\mathbb{E}_{p_0,q}\left[\|\mathbf{d}_{t,k}\|_2^2\right]+\frac{\sinh^2(\omega_k(1-t))}{\sinh^2(\omega_k)}\mathbb{E}_{q_t}\left[\|\boldsymbol{\delta}_{t,k}\|_2^2\right]\right),    
\end{aligned}
\end{equation}
through which the proof is completed.
\end{proof}

\subsection{Proof of Corollary \ref{prop:uncertainty_bound}}\label{app:uncertainty_bound}
\begin{proof}
For $t\in(0,1)$, define the rescaled spectral observation
\begin{equation}
\mathbf Z_{t,k}
=
\frac{\hat{\boldsymbol\gamma}_k^\star(t)}
     {\beta_k(t)}
=
\hat{\mathbf x}_{1,k}
+
\frac{\alpha_k(t)}{\beta_k(t)}
\hat{\mathbf x}_{0,k}.    
\end{equation}
Since
\begin{equation}
\hat{\boldsymbol\mu}_{t,k}^q(\mathbf X_t)
=
\mathbb E_{p_0,q}[
\hat{\mathbf x}_{1,k}\mid\mathbf X_t]    
\end{equation}
is the minimum mean-squared-error estimator of
$\hat{\mathbf x}_{1,k}$, using $\mathbf Z_{t,k}$ as a candidate
estimator gives
\begin{equation}
\begin{aligned}
\mathbb E_{p_0,q}[\|\mathbf d_{t,k}\|_2^2]
&\leq
\mathbb E_{p_0,q}[
\|\hat{\mathbf x}_{1,k}-\mathbf Z_{t,k}\|_2^2] \\
&=
\frac{\alpha_k^2(t)}{\beta_k^2(t)}
\mathbb E_{p_0}[\|\hat{\mathbf x}_{0,k}\|_2^2] \\
&=
R\frac{\alpha_k^2(t)}{\beta_k^2(t)}.
\end{aligned}
\label{eq:app_corollary_5}
\end{equation}
Because
\begin{equation}
\frac{\sinh^2(\omega_k t)}
     {\sinh^2(\omega_k(1-t))}
=
\frac{\beta_k^2(t)}{\alpha_k^2(t)},    
\end{equation}
multiplying both sides of \eqref{eq:app_corollary_5} by this factor gives
\begin{equation}
\frac{\sinh^2(\omega_k t)}
     {\sinh^2(\omega_k(1-t))}
\mathbb E_{p_0,q}[\|\mathbf d_{t,k}\|_2^2]
\leq R.    
\label{eq:app_corollary_6}
\end{equation}
For $\omega_k=0$, the ratio in \eqref{eq:app_corollary_6} is defined by its continuous limit $t^2/(1-t)^2$, and the same bound holds.
Finally, summing \eqref{eq:app_corollary_6} over all source samples, flow times, and graph-frequency
modes yields
\begin{equation}
\sum_{k=1}^N\sum_{t\in\mathcal{R}_\mathrm{ad}}|\mathcal{X}_\mathrm{ad}| \frac{\sinh^2(\omega_kt)}{\sinh^2(\omega_k(1-t))}\mathbb{E}_{p_0,q}\left[\|\hat{\mathbf{x}}_{1,k}-\hat{\boldsymbol{\mu}}^q_{t,k}(\mathbf{X})\|_2^2\right]
\leq
|\mathcal X_{\mathrm{ad}}|\,
|\mathcal R_{\mathrm{ad}}|\,
NR,
\end{equation}
which completes the proof.
\end{proof}

\section{Experimental Details and Additional Results}\label{app:experiments}

\subsection{Datasets}\label{app:datasets}
We evaluate GRASP on four MTS anomaly detection datasets from mTSBench \citep{zhou2026mtsbench}, covering spacecraft telemetry, server monitoring, network intrusion detection, and space-weather analysis. 
Each dataset contains separate training and test sequences, together with pointwise anomaly labels for evaluation. We use the data preprocessing and benchmark splits provided by mTSBench. We briefly introduce the four datasets below:
\begin{itemize}
    \item SMAP \citep{hundman2018detecting}: The Soil Moisture Active Passive dataset contains telemetry collected from NASA’s SMAP spacecraft. The benchmark subset comprises 51 multivariate sequences, each containing 26 variables. This dataset evaluates the ability to detect anomalous behavior in spacecraft telemetry.
\item SMD \citep{su2019robust}: The Server Machine Dataset contains monitoring measurements collected from server machines. We use 18 sequences with 39 variables each, where the labeled anomalies represent deviations from normal server operation.
\item CICIDS \citep{sharafaldin2018toward}: CICIDS2017 is a network intrusion detection dataset containing both benign traffic and multiple types of cyberattacks. The benchmark subset comprises six sequences, each represented by 73 network-flow features.
\item SWAN \citep{angryk2020swan}: The Space Weather Analytics for Solar Flares dataset contains multivariate time series of physical properties associated with solar active regions. We use the 39-variable sequence included in the mTSBench anomaly detection benchmark.
\end{itemize}

\subsection{Baselines}
We compare GRASP with 13 representative baselines spanning statistical outlier detection, forecasting- and reconstruction-based detection, graph-based modeling, Transformers, and frequency-domain methods. We briefly introduce these baselines below:
\begin{itemize}
\item \textbf{GDN} \citep{deng2021graph} learns inter-sensor dependencies using node embeddings and graph attention, and detects anomalies using forecasting errors.

\item \textbf{GCAD} \citep{liu2025gcad} infers dynamic Granger-causal graphs from the gradients of a forecasting model and identifies anomalies through deviations in the learned causal patterns.

\item \textbf{COPOD} \citep{li2020copod} estimates empirical copula-based tail probabilities and assigns larger anomaly scores to statistically extreme observations.

\item \textbf{HBOS} \citep{goldstein2012histogram} constructs a histogram for each variable and combines the resulting density estimates under a feature-independence assumption.

\item \textbf{TimesNet} \citep{wu2023timesnet} transforms one-dimensional time series into period-dependent two-dimensional representations to capture intra- and inter-period variations.

\item \textbf{CNN} \citep{munir2018deepant} uses temporal convolutions to forecast future observations and detects anomalies through prediction errors.

\item \textbf{USAD} \citep{audibert2020usad} employs adversarially trained Autoencoders to learn normal temporal patterns and scores anomalies using reconstruction errors.

\item \textbf{TranAD} \citep{TranAD} combines Transformer-based sequence modeling, self-conditioning, and adversarial training for MTS anomaly detection.

\item \textbf{OmniAnomaly} \citep{su2019robust} combines recurrent temporal modeling with a variational Autoencoder to capture stochastic temporal dependencies and detects anomalies using reconstruction probabilities.

\item \textbf{Autoencoder} \citep{sakurada2014anomaly} learns a compressed representation of normal observations and uses reconstruction errors as anomaly scores.

\item \textbf{A-Transformer} \citep{xu2022anomaly} models prior and learned temporal associations through anomaly attention and detects anomalies using the discrepancy between these associations.

\item \textbf{FITS} \citep{xu2024fits} models time series through learnable interpolation of low-frequency components in the complex frequency domain.

\item \textbf{CATCH} \citep{wu2025catch} divides frequency-domain representations into patches and captures spectral patterns and inter-channel dependencies through masked channel fusion.
\end{itemize}
\begin{table*}[t]
\centering
\caption{
Anomaly detection performance comparison on SMAP and SMD.
}\vspace{2pt}
\label{tab:main_results_smap_smd}
\resizebox{\textwidth}{!}{
\begin{tabular}{l|ccc|ccc}
\toprule
\multirow{2}{*}{Model}
& \multicolumn{3}{c|}{SMAP}
& \multicolumn{3}{c}{SMD} \\
& PRC & ROC & Best-F1
& PRC & ROC & Best-F1 \\
\midrule

A-Transformer
& 0.1340 $\pm$ 0.0072
& 0.5065 $\pm$ 0.0071
& 0.2060 $\pm$ 0.0110
& 0.0720 $\pm$ 0.0046
& 0.5013 $\pm$ 0.0049
& 0.1260 $\pm$ 0.0071 \\

FITS
& 0.1455 $\pm$ 0.0047
& 0.5147 $\pm$ 0.0068
& 0.2798 $\pm$ 0.0063
& 0.2863 $\pm$ 0.0026
& 0.8299 $\pm$ 0.0022
& 0.4025 $\pm$ 0.0026 \\

TimesNet
& 0.1792 $\pm$ 0.0071
& 0.5336 $\pm$ 0.0058
& 0.3161 $\pm$ 0.0089
& 0.2839 $\pm$ 0.0053
& 0.8087 $\pm$ 0.0021
& 0.3713 $\pm$ 0.0049 \\

COPOD
& 0.1927 $\pm$ 0.0000
& 0.5918 $\pm$ 0.0000
& 0.2841 $\pm$ 0.0000
& 0.2139 $\pm$ 0.0000
& 0.7193 $\pm$ 0.0000
& 0.2951 $\pm$ 0.0000 \\

HBOS
& 0.1897 $\pm$ 0.0000
& 0.5650 $\pm$ 0.0000
& 0.2664 $\pm$ 0.0000
& 0.2678 $\pm$ 0.0000
& 0.7378 $\pm$ 0.0000
& 0.3556 $\pm$ 0.0000 \\

TranAD
& 0.2122 $\pm$ 0.0042
& 0.6104 $\pm$ 0.0056
& 0.3264 $\pm$ 0.0066
& 0.3304 $\pm$ 0.0022
& 0.7691 $\pm$ 0.0022
& 0.4181 $\pm$ 0.0018 \\

CNN
& 0.2136 $\pm$ 0.0029
& \underline{0.6577 $\pm$ 0.0088}
& 0.3471 $\pm$ 0.0055
& 0.3730 $\pm$ 0.0049
& 0.7921 $\pm$ 0.0042
& 0.4313 $\pm$ 0.0046 \\

Autoencoder
& 0.2277 $\pm$ 0.0006
& 0.5859 $\pm$ 0.0015
& 0.2991 $\pm$ 0.0006
& 0.3613 $\pm$ 0.0050
& 0.7297 $\pm$ 0.0034
& 0.4214 $\pm$ 0.0034 \\

USAD
& 0.2263 $\pm$ 0.0049
& 0.5850 $\pm$ 0.0089
& 0.3850 $\pm$ 0.0075
& 0.4105 $\pm$ 0.0040
& 0.8726 $\pm$ 0.0019
& 0.4810 $\pm$ 0.0021 \\

OmniAnomaly
& 0.2420 $\pm$ 0.0003
& 0.5950 $\pm$ 0.0002
& \underline{0.4065 $\pm$ 0.0003}
& 0.4284 $\pm$ 0.0001
& \underline{0.8822 $\pm$ 0.0001}
& \underline{0.5061 $\pm$ 0.0001} \\

CATCH
& \underline{0.2450 $\pm$ 0.0028}
& 0.6078 $\pm$ 0.0046
& 0.3664 $\pm$ 0.0041
& \underline{0.4763 $\pm$ 0.0011}
& \textbf{0.8968 $\pm$ 0.0004}
& 0.5057 $\pm$ 0.0012 \\

GDN
& 0.2197 $\pm$ 0.0115
& 0.6099 $\pm$ 0.0070
& 0.3394 $\pm$ 0.0087
& 0.4006 $\pm$ 0.0157
& 0.8014 $\pm$ 0.0104
& 0.4640 $\pm$ 0.0140 \\

GCAD
& 0.2144 $\pm$ 0.0135
& 0.6348 $\pm$ 0.0119
& 0.3471 $\pm$ 0.0104
& 0.3569 $\pm$ 0.0089
& 0.8416 $\pm$ 0.0074
& 0.4535 $\pm$ 0.0109 \\

\midrule
\rowcolor{gray!12}
\textbf{GRASP}
& \textbf{0.3149 $\pm$ 0.0087}
& \textbf{0.7015 $\pm$ 0.0130}
& \textbf{0.4178 $\pm$ 0.0064}
& \textbf{0.4845 $\pm$ 0.0043}
& 0.8567 $\pm$ 0.0038
& \textbf{0.5087 $\pm$ 0.0034} \\

\bottomrule
\end{tabular}
}
\end{table*}
\subsection{Implementation Details}\label{app:implementation_details}
We implement GRASP in PyTorch and evaluate it on SMAP, SMD, CICIDS, and SWAN. For each time series, the normal training data are chronologically divided into 80\% for training and 20\% for validation. We use a window length of 50 and set the graph Dirichlet energy weight parameter to $\tau=2$ for all datasets. 
For graph construction, we binarize the edge-weight matrix using a fixed threshold of 0.5 across all datasets.

The velocity predictor used in GRASP consists of two time-conditioned TSMixer blocks with a hidden dimension of 128 and a dropout rate of 0.1. Flow time is encoded with a fixed 64-dimensional sinusoidal embedding. 
The model is trained with a batch size of 256 and a learning rate of $10^{-3}$. 
Training is limited to 1500 epochs, and validation is performed every 50 epochs. 
No anomaly labels are used for training, validation, graph construction, or anomaly score calibration. 

During inference, anomaly scores are computed from the weighted velocity discrepancies aggregated across graph-frequency modes, source samples, and flow times. 
We compute the average of five source samples, and each prior sample is evaluated at ten flow times sampled equally between 0 and 1.
All experiments are conducted on a single 80GB NVIDIA A100 GPU. We repeat each experiment using ten random seeds.

\subsection{Additional MTS Anomaly Detection Results}
In Section \ref{sec:anomaly_detection_performance}, we have presented the mean MTS anomaly detection performance over 10 random seeds in Table \ref{tab:main_results}.
To further assess the stability of the compared methods, we further report the results with standard deviation on the SMAP and SMD datasets in Table \ref{tab:main_results_smap_smd}, and on the CICIDS and SWAN datasets in Table \ref{tab:main_results_cicids_swan}.

GRASP achieves the highest PRC and Best-F1 on all four datasets, as
well as the highest ROC on three of them. On SMD, GRASP obtains the
best PRC and Best-F1, while CATCH achieves the highest ROC. The
relatively small standard deviations of GRASP indicate that its
performance is stable across random seeds.

\begin{table*}[t]
\centering
\caption{
Anomaly detection performance comparison on CICIDS and SWAN.
}\vspace{2pt}
\label{tab:main_results_cicids_swan}
\resizebox{\textwidth}{!}{
\begin{tabular}{l|ccc|ccc}
\toprule
\multirow{2}{*}{Model}
& \multicolumn{3}{c|}{CICIDS}
& \multicolumn{3}{c}{SWAN} \\
& PRC & ROC & Best-F1
& PRC & ROC & Best-F1 \\
\midrule

A-Transformer
& 0.2396 $\pm$ 0.0007
& 0.4972 $\pm$ 0.0025
& 0.3060 $\pm$ 0.0000
& 0.2817 $\pm$ 0.0018
& 0.5021 $\pm$ 0.0014
& 0.4365 $\pm$ 0.0000 \\

FITS
& 0.1849 $\pm$ 0.0013
& 0.4106 $\pm$ 0.0039
& 0.3126 $\pm$ 0.0013
& 0.3189 $\pm$ 0.0120
& 0.4607 $\pm$ 0.0120
& 0.4365 $\pm$ 0.0000 \\

TimesNet
& 0.1905 $\pm$ 0.0021
& 0.4657 $\pm$ 0.0052
& 0.3121 $\pm$ 0.0015
& 0.3068 $\pm$ 0.0080
& 0.4505 $\pm$ 0.0137
& 0.4365 $\pm$ 0.0000 \\

COPOD
& 0.2214 $\pm$ 0.0000
& 0.5566 $\pm$ 0.0000
& 0.3733 $\pm$ 0.0000
& 0.2792 $\pm$ 0.0000
& 0.5000 $\pm$ 0.0000
& 0.4365 $\pm$ 0.0000 \\

HBOS
& 0.2603 $\pm$ 0.0000
& 0.5586 $\pm$ 0.0000
& 0.3852 $\pm$ 0.0000
& 0.2792 $\pm$ 0.0000
& 0.5000 $\pm$ 0.0000
& 0.4365 $\pm$ 0.0000 \\

TranAD
& 0.2043 $\pm$ 0.0021
& 0.4704 $\pm$ 0.0085
& 0.3416 $\pm$ 0.0007
& 0.3360 $\pm$ 0.0020
& 0.4911 $\pm$ 0.0039
& 0.4365 $\pm$ 0.0001 \\

CNN
& 0.2371 $\pm$ 0.0019
& 0.5543 $\pm$ 0.0056
& 0.3991 $\pm$ 0.0009
& 0.3760 $\pm$ 0.0056
& 0.4966 $\pm$ 0.0086
& 0.4365 $\pm$ 0.0000 \\

Autoencoder
& 0.3147 $\pm$ 0.0089
& \underline{0.7276 $\pm$ 0.0408}
& 0.3938 $\pm$ 0.0106
& 0.3294 $\pm$ 0.0025
& 0.4944 $\pm$ 0.0091
& 0.4371 $\pm$ 0.0006 \\

USAD
& 0.2094 $\pm$ 0.0007
& 0.4619 $\pm$ 0.0019
& 0.3525 $\pm$ 0.0003
& 0.4405 $\pm$ 0.0088
& 0.5585 $\pm$ 0.0095
& 0.4365 $\pm$ 0.0000 \\

OmniAnomaly
& 0.2138 $\pm$ 0.0000
& 0.4501 $\pm$ 0.0000
& 0.3548 $\pm$ 0.0000
& 0.4847 $\pm$ 0.0002
& 0.6130 $\pm$ 0.0004
& 0.4365 $\pm$ 0.0000 \\

CATCH
& 0.2866 $\pm$ 0.0007
& 0.6162 $\pm$ 0.0006
& \underline{0.4023 $\pm$ 0.0008}
& 0.4638 $\pm$ 0.0017
& 0.6218 $\pm$ 0.0018
& 0.4533 $\pm$ 0.0009 \\

GDN
& 0.2926 $\pm$ 0.0188
& 0.7017 $\pm$ 0.0239
& 0.3805 $\pm$ 0.0148
& 0.6423 $\pm$ 0.0231
& \underline{0.8356 $\pm$ 0.0091}
& \underline{0.6432 $\pm$ 0.0155} \\

GCAD
& \underline{0.3636 $\pm$ 0.0194}
& 0.7191 $\pm$ 0.0151
& 0.3844 $\pm$ 0.0055
& \underline{0.6445 $\pm$ 0.0146}
& 0.8140 $\pm$ 0.0070
& 0.6231 $\pm$ 0.0089 \\

\midrule
\rowcolor{gray!12}
\textbf{GRASP}
& \textbf{0.3996 $\pm$ 0.0083}
& \textbf{0.8108 $\pm$ 0.0072}
& \textbf{0.4615 $\pm$ 0.0037}
& \textbf{0.7556 $\pm$ 0.0116}
& \textbf{0.8758 $\pm$ 0.0101}
& \textbf{0.6886 $\pm$ 0.0145} \\

\bottomrule
\end{tabular}
}
\end{table*}
\begin{table*}[t]
\centering
\caption{
Ablation results on CICIDS and SWAN.
}\vspace{2pt}
\label{tab:path_ablation}
\resizebox{0.7\textwidth}{!}{
\begin{tabular}{l|ccc|ccc}
\toprule
\multirow{2}{*}{Variant}
& \multicolumn{3}{c|}{CICIDS}
& \multicolumn{3}{c}{SWAN} \\
& PRC & ROC & Best-F1
& PRC & ROC & Best-F1 \\
\midrule
\rowcolor{gray!12}
GRASP
& \textbf{0.3996} & \textbf{0.8108} & \textbf{0.4615}
& \textbf{0.7556} & \textbf{0.8758} & \textbf{0.6886}\\
GRASP-mean
& \underline{0.3764} & \underline{0.7593} & \underline{0.4499}
& \underline{0.7552} & \underline{0.8753} & \underline{0.6881} \\
GRASP-ER
& 0.3734 & 0.7546 & 0.4497
& 0.7534 & 0.8723 & 0.6857 \\

GRASP-lin
& 0.3740 & 0.7543 & 0.4493
& 0.7486 & 0.8714 & 0.6841 \\

\bottomrule
\end{tabular}
}
\end{table*}
\begin{table*}[t]
\centering
\caption{
Ablation study of velocity-field architectures.
}\vspace{2pt}
\label{tab:velocity_field_ablation}
\resizebox{\textwidth}{!}{
\begin{tabular}{l|ccc|ccc|ccc|ccc}
\toprule
\multirow{2}{*}{Variant}
& \multicolumn{3}{c|}{SMAP}
& \multicolumn{3}{c|}{SMD}
& \multicolumn{3}{c|}{CICIDS}
& \multicolumn{3}{c}{SWAN} \\
& PRC & ROC & Best-F1
& PRC & ROC & Best-F1
& PRC & ROC & Best-F1
& PRC & ROC & Best-F1 \\
\midrule
\rowcolor{gray!12}
TSMixer
& \textbf{0.3149} & \textbf{0.7015} & \textbf{0.4178}
& \textbf{0.4845} & \textbf{0.8567} & \textbf{0.5087}
& \textbf{0.3996} & \textbf{0.8108} & \textbf{0.4615}
& \textbf{0.7556} & \textbf{0.8758} & \textbf{0.6886} \\

GNN
& \underline{0.2773} & \underline{0.6758} & \underline{0.3881}
& \underline{0.4659} & \underline{0.8542} & \underline{0.4926}
& \underline{0.3551} & \underline{0.7232} & \underline{0.4459}
& 0.5306 & 0.6570 & \underline{0.5188} \\

Transformer
& 0.2639 & 0.6347 & 0.3795
& 0.4475 & 0.8272 & 0.4674
& 0.3283 & 0.6407 & 0.4331
& \underline{0.5594} & \underline{0.6758} & 0.5122 \\

\bottomrule
\end{tabular}
}
\end{table*}
\subsection{Additional Ablation Results}\label{app:ablation}
Table~\ref{tab:path_ablation} extends the ablation study in Table \ref{tab:ablation} to CICIDS and SWAN.
The GRASP model consistently achieves the best performance across all metrics.
The improvements are particularly clear on CICIDS, whereas the gains on SWAN are smaller but remain consistent.
Overall, replacing the graph-spectral path with a linear path, removing the graph-spectral weighting scheme, or using a randomly generated graph leads to performance degradation. 
These results validate the individual contribution of the three design components.

\subsection{Comparison of Velocity-Field Architectures}\label{app:ablation_velocity_model}
In this section, we examine the effect of the velocity-field architecture by replacing the default TSMixer backbone of GRASP with two time-conditioned alternatives based on a GNN and a Transformer.
The GNN treats sensor channels as nodes and applies two residual GCN blocks for cross-channel propagation \citep{kipf2017semisupervised}. 
The Transformer treats sensor channels as tokens and employs two encoder layers with four-head self-attention \citep{vaswani2017attention}. 
All three architectures use a hidden dimension of 128 and are trained and evaluated under the same experimental protocol.

As shown in Table~\ref{tab:velocity_field_ablation}, TSMixer achieves the highest PRC, ROC, and Best-F1 on all four datasets. 
The advantage is particularly pronounced on SWAN, where its PRC reaches 0.7552, compared with 0.5306 for the GNN and 0.5594 for the Transformer.
These results support the use of the simple yet effective TSMixer model in GRASP.
Under the considered setting, the tested GNN and Transformer alternatives provide no performance improvement.

\subsection{Additional Flow-Time and Graph-Frequency Analysis}\label{app:attribution}
Figure \ref{fig:attribution_app} extends the anomaly attribution analysis in Section \ref{sec:signal_attribution_analysis} to SMD, CICIDS, and SWAN datasets.
We decompose the squared velocity residuals across flow time and graph-frequency quantiles. 
The graph frequencies are ordered from low to high Laplacian eigenvalues, while larger flow times correspond to states closer to the data endpoint. 
The top row presents the unweighted residual contributions, and the bottom row presents the contributions
after applying the weighting schedule.

On SMD and CICIDS, the anomaly evidence is primarily concentrated in intermediate graph-frequency quantiles at later flow times. 
Applying the score weights further shifts the attribution toward the data endpoint while preserving the dominant frequency regions. 
In contrast, SWAN exhibits a more diffuse attribution pattern across graph frequencies. 
These results suggest that the flow time toward the endpoint provides more informative anomaly signals for all datasets, while the informative graph-frequency components vary across datasets rather than being universally concentrated in the highest-frequency modes.
\begin{figure}[t]
    \centering
    \includegraphics[width=1.0\columnwidth]{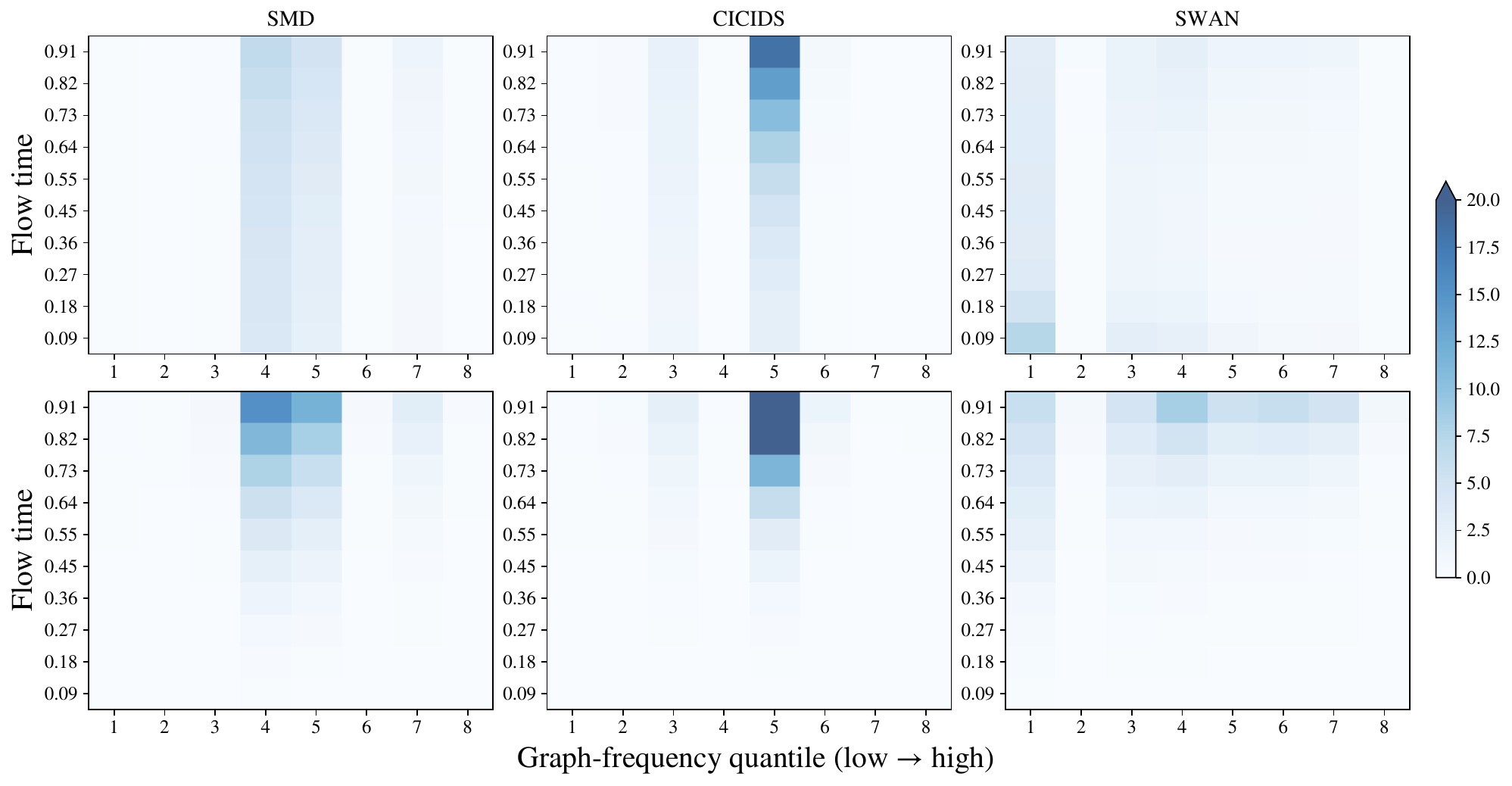}
    \caption{Anomaly score contributions across flow times and graph-frequency quantiles on SMD, CICIDS, and SWAN. Up: unweighted anomaly signal. Below: weighted anomaly signal.}\label{fig:attribution_app}
\end{figure}

\subsection{Sensitivity to Flow-Time Evaluations and Source Samples} \label{app:sensitivity_number_evaluations}
In this section, we examine the sensitivity of GRASP to two inference-time hyperparameters: the number of flow-time evaluation points $|\mathcal{R}_{\mathrm{ad}}|$ and the number of source samples $|\mathcal{X}_{\mathrm{ad}}|$.
We vary $|\mathcal{R}_{\mathrm{ad}}|\in\{1,2,5,10,20,50\}$ while fixing $|\mathcal{X}_{\mathrm{ad}}|=5$, and vary $|\mathcal{X}_{\mathrm{ad}}|\in\{1,2,5,10,20\}$ while fixing $|\mathcal{R}_{\mathrm{ad}}|=10$. 
Figure \ref{fig:sensitivity_T} and Figure \ref{fig:sensitivity_K} report the mean and standard deviation over ten random seeds.

As shown in Figure \ref{fig:sensitivity_T}, increasing the number of flow-time evaluation points generally improves performance on SMAP, SMD, and CICIDS, especially when $|\mathcal{R}_{\mathrm{ad}}|<10$. 
Performance on SWAN remains stable across different values of $|\mathcal{R}_{\mathrm{ad}}|$. 
Figure~\ref{fig:sensitivity_K} shows that GRASP is relatively insensitive to the number of source samples. Increasing $|\mathcal{X}_{\mathrm{ad}}|$ yields modest improvements on SMAP and negligible changes on the other datasets. 
These results support the default choices of $|\mathcal{R}_{\mathrm{ad}}|=10$ and $|\mathcal{X}_{\mathrm{ad}}|=5$ as practical trade-offs between detection performance and inference cost.

\begin{figure}[t]
    \centering
    \includegraphics[width=1.0\columnwidth]{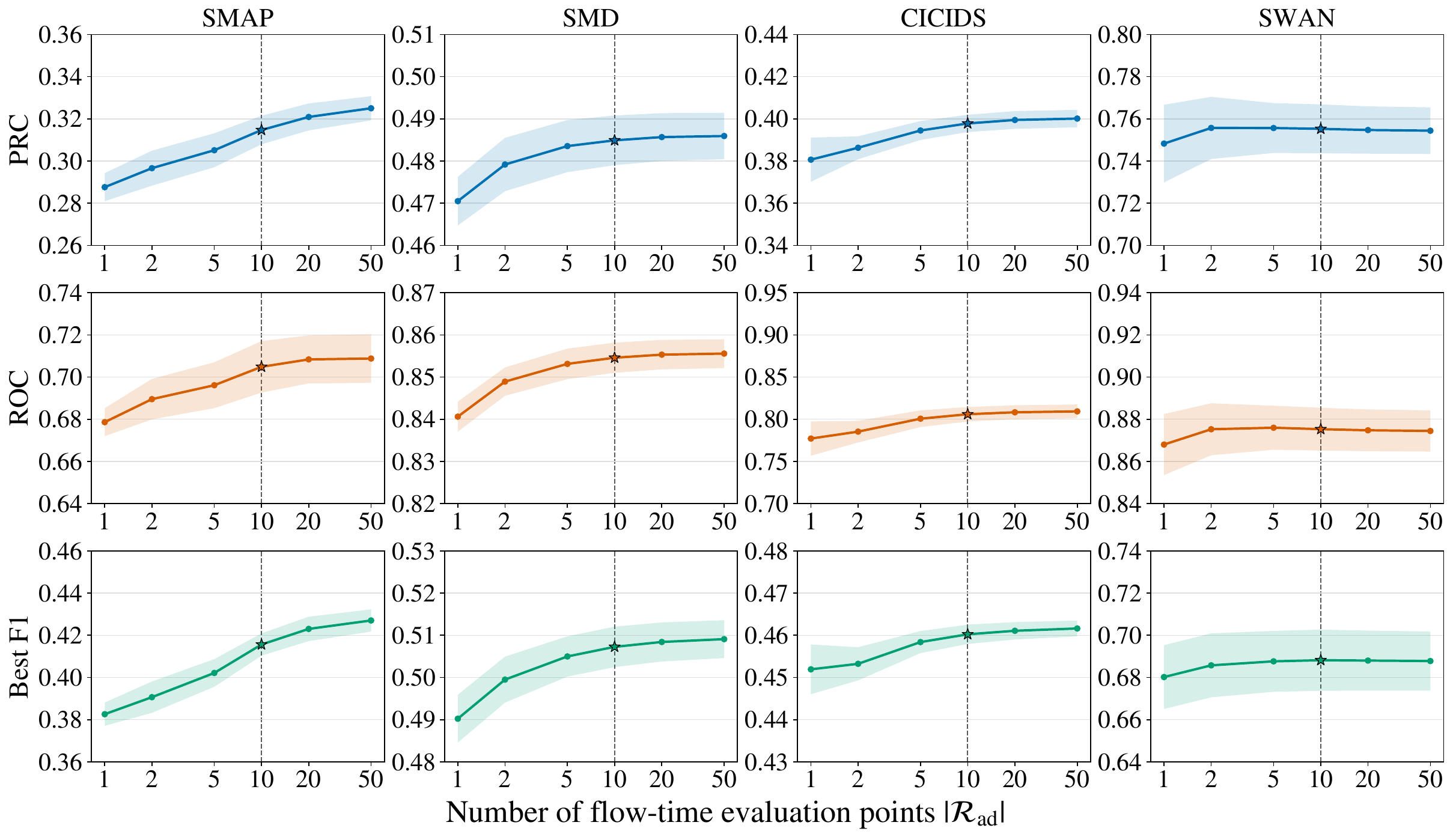}
    \caption{Sensitivity to the number of flow-time evaluation points
$|\mathcal{R}_{\mathrm{ad}}|$ with
$|\mathcal{X}_{\mathrm{ad}}|=5$.}
    \label{fig:sensitivity_T}
\end{figure}

\begin{figure}[t]
    \centering
    \includegraphics[width=1.0\columnwidth]{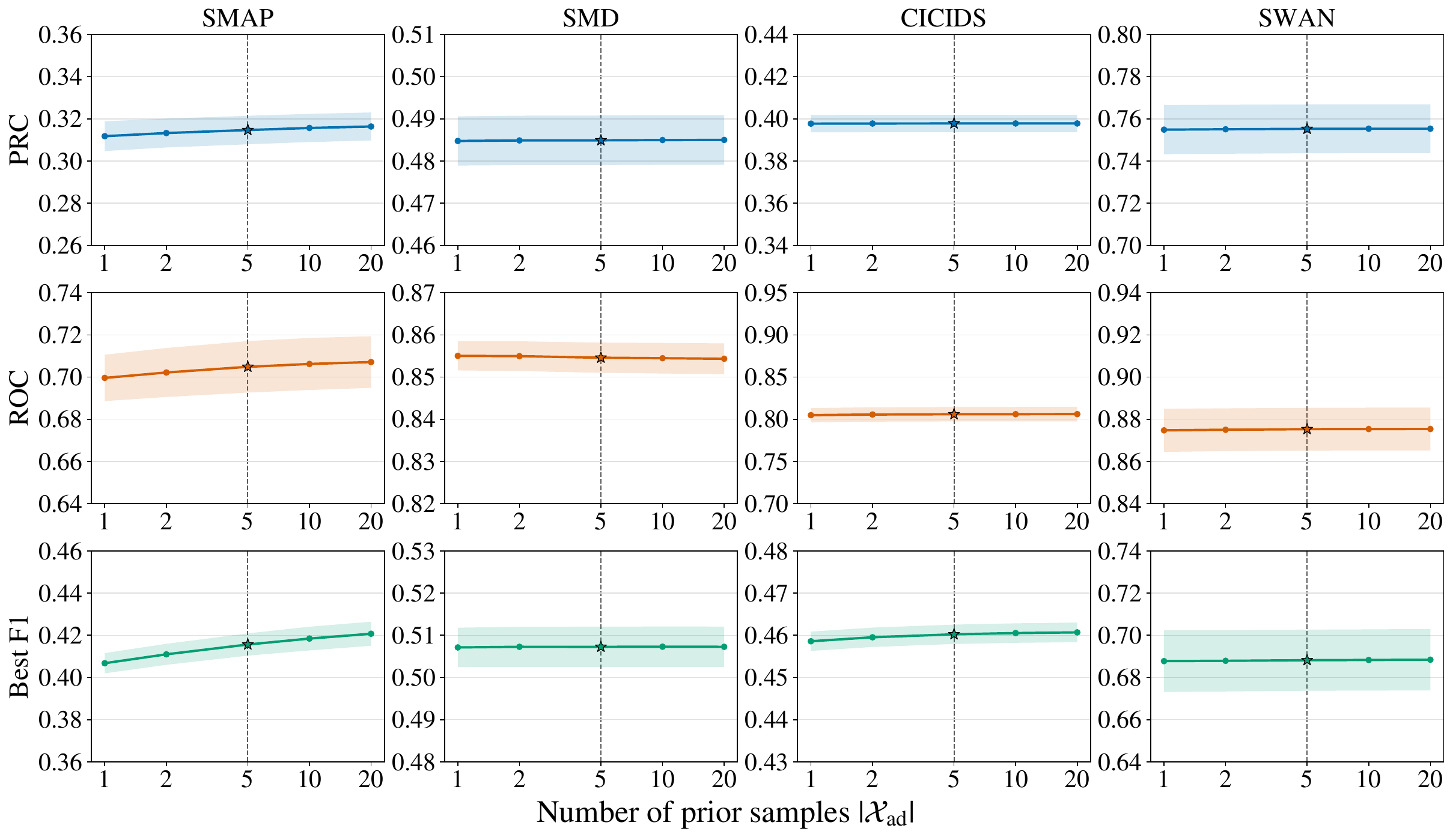}
    \caption{Sensitivity to the number of source samples
$|\mathcal{X}_{\mathrm{ad}}|$ with
$|\mathcal{R}_{\mathrm{ad}}|=10$.}
    \label{fig:sensitivity_K}
\end{figure}

\subsection{Sensitivity to the Graph Dirichlet-Energy Weight}
\label{app:sensitivity_tau}
The graph Dirichlet-energy weight $\tau$ controls the strength of the graph-structural regularization in the graph-spectral probability path, with $\tau=0$ recovering the standard linear path. 
We evaluate $\tau \in \{0, 0.25, 0.5, 1, 2, 4, 8\}$ on all four datasets, retraining the model for each value while keeping all other hyperparameters fixed.

Figure \ref{fig:sensitivity_tau} shows that the preferred value varies across datasets. 
SMD generally favors smaller values around $\tau=0.5$, whereas SMAP and CICIDS attain their best PRC and Best-F1 near $\tau=4$. 
SWAN performs best over an intermediate range around $\tau=1$--$2$. 
Despite these dataset-specific optima, performance remains relatively stable for $\tau \in [0.5,4]$.
In contrast, $\tau=8$ leads to a clear performance degradation, suggesting that excessively strong graph regularization could weaken the anomaly signal.

\begin{figure}[t]
    \centering
    \includegraphics[width=1.0\columnwidth]{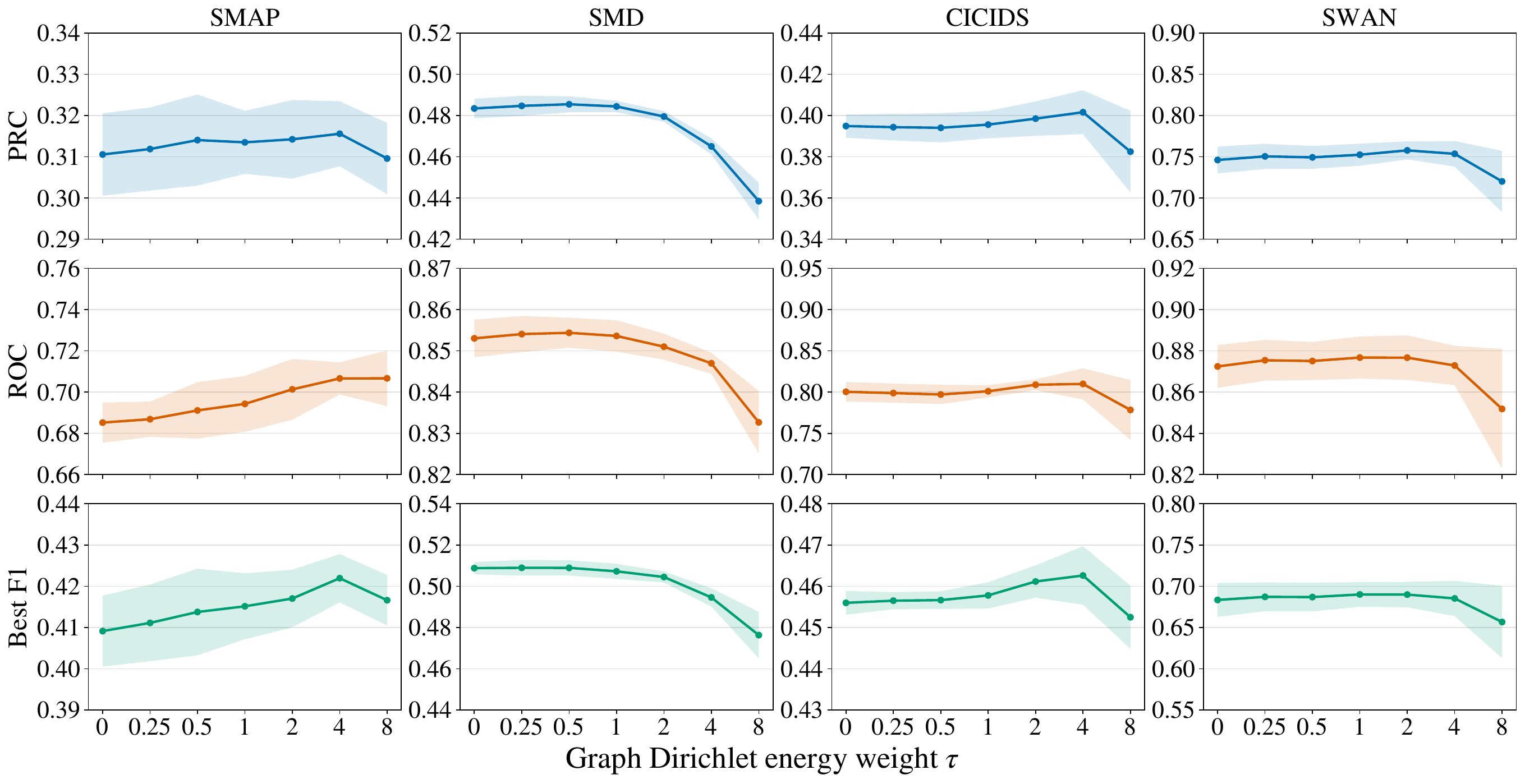}
    \caption{Sensitivity to the graph Dirichlet-energy weight $\tau$.}
    \label{fig:sensitivity_tau}
\end{figure}
\begin{figure}[t]
    \centering
    \includegraphics[width=1.0\columnwidth]{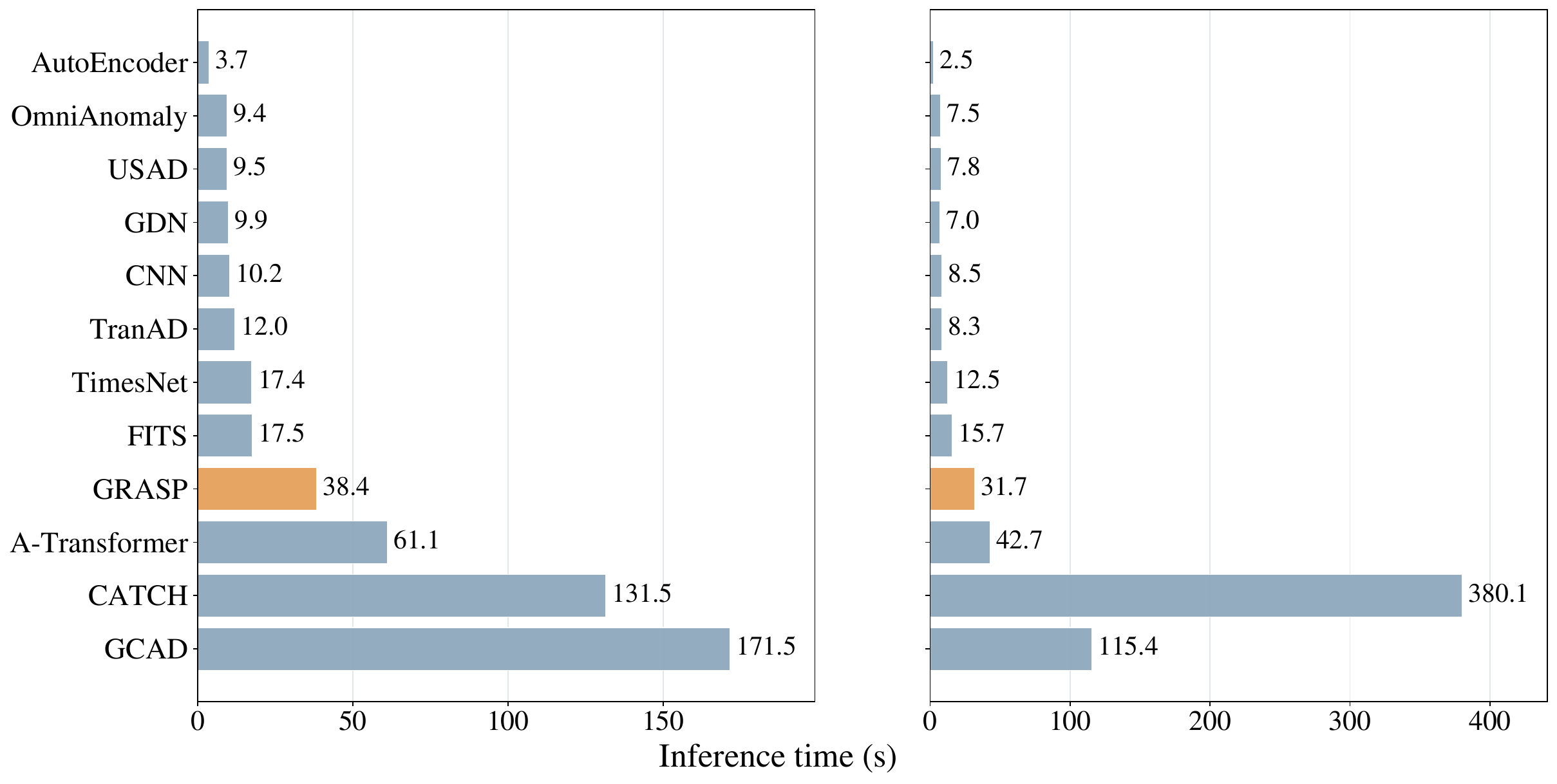}
    \caption{Inference time comparison on SMAP (left) and SMD (right).}
    \label{fig:inference_time}
\end{figure}
\subsection{Inference Efficiency}\label{app:inference_time}
In this section, we evaluate the inference efficiency of GRASP.
Specifically, we compare the inference time of anomaly detectors on the test set of SMAP and SMD.
For GRASP, we use the default setting with $|\mathcal{R}_{\mathrm{ad}}|=10$ flow-time evaluation points and $|\mathcal{X}_{\mathrm{ad}}|=5$ source samples.
Note that the eigenvalue decomposition is not included in the inference time computation as it is only performed once for each dataset before training and reused during inference.

As shown in Figure \ref{fig:inference_time}, GRASP requires 38.4 seconds on SMAP and 31.7 seconds on SMD to score the full test sets.
GRASP demonstrates faster inference compared with A-Transformer, CATCH, and GCAD on both datasets, but slower than the remaining baselines because its anomaly score aggregates multiple flow-time evaluations and source samples.
For application scenarios where faster inference is preferred, GRASP also provides a controllable trade-off between inference efficiency and detection performance.
Specifically, as discussed in Section \ref{app:sensitivity_number_evaluations}, GRASP can be further accelerated by reducing the flow-time evaluation points $|\mathcal{R}_{\mathrm{ad}}|$ and source samples $|\mathcal{X}_{\mathrm{ad}}|$ with a bit of compromised performance.


\end{document}